%% file: iclr2027_conference.tex
\documentclass{article}

\IfFileExists{iclr2027_conference.sty}{
  \usepackage{iclr2027_conference,times}
}{
  \usepackage[
    letterpaper,
    textwidth=5.5in,
    textheight=9in,
    top=1in,
    left=1.5in
  ]{geometry}
  \usepackage{times}
  \usepackage[round,authoryear]{natbib}
}

\usepackage{amsmath,amssymb,amsthm,mathtools}
\usepackage{booktabs}
\usepackage{multirow}
\usepackage{microtype}
\usepackage{graphicx}
\usepackage{xcolor}
\usepackage[table]{xcolor}
\usepackage{url}
\usepackage{hyperref}
\hypersetup{hidelinks}
\usepackage{enumitem}

\usepackage{wrapfig}
\definecolor{ecrpositive}{HTML}{2F6DB0}
\definecolor{ecrnegative}{HTML}{D44E41}

\definecolor{improvegreen}{RGB}{28,128,76}
\definecolor{oursgray}{RGB}{225,225,225}

\newcommand{\ours}{UECR-GRPO}
\newcommand{\T}{\mathrm{T}}
\newcommand{\old}{\mathrm{old}}
\newcommand{\task}{\mathrm{task}}
\newcommand{\KL}{\mathrm{KL}}
\newcommand{\E}{\mathbb{E}}

\newcommand{\clip}{\operatorname{clip}}
\newcommand{\softmax}{\operatorname{softmax}}

\newtheorem{proposition}{Proposition}
\newtheorem{lemma}{Lemma}
\newtheorem{corollary}{Corollary}

\title{
  When and Where to Trust the Teacher:
  Unifying On-Policy Distillation and GRPO through
  Entropy-Calibrated Credit Assignment
}

\author{
Jie Zhang$^{1}$,\quad
Jingxiao Yang$^{2}$,\quad
Zhehao Huang$^{1}$,\quad
Yuhang Liu$^{1}$,\quad
Xiaolin Huang$^{1,\dagger}$
\\[-1pt]
\normalfont
$^{1}$Shanghai Jiao Tong University
\\
$^{2}$Zhejiang University
}
\date{}

\iclrfinalcopy

\begin{document}

\maketitle

\begingroup
\renewcommand{\thefootnote}{\fnsymbol{footnote}}
\footnotetext[2]{Corresponding author.}
\endgroup

\lhead{Preprint}

\input{content/abstract}

\input{content/introduction}
\input{content/related_work}
\input{content/preliminaries_and_method}
\input{content/experiments}
\input{content/conclusion}

\clearpage
\bibliography{iclr2027_conference}
\IfFileExists{iclr2027_conference.bst}{
  \bibliographystyle{iclr2027_conference}
}{
  \bibliographystyle{plainnat}
}

\clearpage
\appendix
\input{content/appendix}

\end{document}

%% file: content/abstract.tex
\begin{abstract}
Reinforcement learning with verifiable rewards (RLVR) supervises mathematical
reasoning through final-answer correctness, but provides little guidance on
individual tokens. On-policy distillation (OPD) supplies dense feedback on
student-generated responses, yet teacher preference need not reflect
correctness. Recent hybrids combine OPD and verifier-derived advantages or
reweight task credit using teacher ratios. However, teacher guidance enters
after verifier-based group normalization, and token reweighting need not
preserve the total task credit assigned to each response.
We introduce Unified Entropy-Calibrated Credit Redistribution for GRPO
(UECR-GRPO), which integrates verifier and teacher signals within a single
GRPO-style update at both the response and token levels. \emph{Path-Utility Unification} (PUU) combines verifier reward and a
teacher-to-anchor path log-ratio in a single KL-regularized objective.
Its on-policy implementation uses a length-normalized teacher score and
combines both rewards before group normalization and PPO clipping, allowing
teacher evidence to influence the response ranking.
\emph{Entropy-Calibrated Redistribution} (ECR) then uses the signed
teacher--old-policy token gap to redistribute the verifier-derived component.
Full-vocabulary teacher entropy attenuates uncertain guidance, while a
response-wise zero-sum projection preserves the total task credit and its
token-wise sign before clipping.
Across five mathematical reasoning benchmarks, UECR-GRPO achieves
average \(\mathrm{Avg@12}\) accuracies of 17.21\% and 65.09\% with
Qwen3-1.7B and Qwen3-4B students, respectively, exceeding the strongest
baseline at each scale by 0.89 and 0.56 percentage points.
\end{abstract}

%% file: content/introduction.tex
\section{Introduction}

Reinforcement learning with verifiable rewards has become a practical route
for eliciting mathematical reasoning from language models
\citep{shao2024deepseekmath,guo2025deepseekr1,yu2025dapo}.  Group Relative
Policy Optimization (GRPO) is particularly attractive because it replaces a
learned value model with comparisons among multiple responses to the same
problem \citep{shao2024deepseekmath}.  The verifier checks the final answer
against an external correctness criterion, but provides coarse credit.
One response-level advantage is normally broadcast to every generated token,
so a decisive algebraic step, a harmless stylistic token, and the first local
error all receive the same advantage.  Moreover, when responses within a group receive identical binary rewards, the group-relative task signal vanishes.

On-policy distillation (OPD) provides complementary information
\citep{agarwal2024gkd,yang2026gopd}.  A stronger teacher scores the student's
own prefixes and therefore supplies dense feedback exactly on the states the
student visits.  Such feedback can distinguish trajectories even when their
verifier rewards are tied.  Yet the teacher may prefer an ultimately
incorrect path, discourage an unfamiliar but valid derivation, or transfer
style rather than mathematical substance.  The verifier therefore indicates
\emph{whether} a response succeeds but not \emph{where}; the teacher provides
local preference but does not define terminal correctness.

Figure~\ref{fig:signal-complementarity} shows that teacher evidence adds
resolution without reliably defining response quality. Across checkpoints, teacher scores distinguish 84.3--98.4\% of verifier-degenerate groups under the fixed threshold. However, teacher-induced ordering conflicts with verifier correctness on 39.1--50.8\% of correct--incorrect response pairs. Among 347 judgeable verifier-tied pairs, teacher preference agrees with blinded Opus 4.8 process-quality judgments in only 56.8\% of cases. Teacher
evidence is thus informative as a policy-relative signal, but is neither a
replacement for terminal verification nor a reliable process-quality label by
itself. This motivates retaining the verifier as the task-defining signal
while using the teacher only to supply additional resolution.

\begin{figure*}[t]
\centering
\includegraphics[width=0.97\textwidth]
{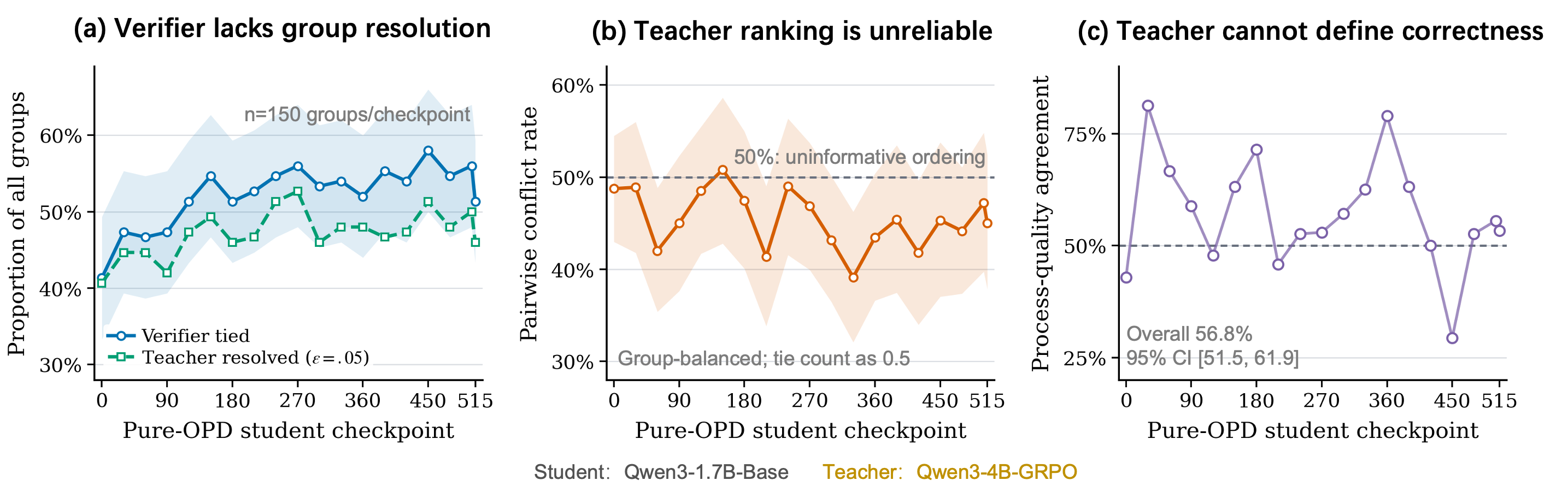}
\caption{Signal complementarity across 19 Pure-OPD checkpoints. At each
checkpoint, the student generates eight responses for each of 150 prompts.
(a) reports verifier-degenerate groups and the subset whose teacher-score
range exceeds \(\epsilon_{\mathrm{gap}}=0.05\). (b) reports the
group-balanced conflict rate over correct--incorrect pairs, with
teacher-score ties counted as \(0.5\). (c) reports agreement between teacher
preference and a blinded independent process-quality judge on verifier-tied
pairs.}
\label{fig:signal-complementarity}
\end{figure*}

Recent methods combine GRPO and teacher supervision beyond a naive sum of
independent losses. Distilled RL multiplicatively reweights each positive,
normalized GRPO advantage using teacher-to-student token ratios
\citep{wang2026distilledrl}. Consequently, when identical verifier rewards
produce a zero GRPO advantage, the reweighted update remains zero. ATOD instead
adds an annealed, turn-weighted token-level OPD advantage to an already
normalized GRPO advantage \citep{tan2026atod}, so its teacher signal remains
active when verifier rewards are tied. However, teacher evidence enters after
group normalization as an additive token-level objective rather than as part
of a joint response utility. Moreover, its token weighting is not constrained
to preserve the response-average update magnitude. This leaves open how to
incorporate teacher evidence into group-relative comparison and localize task
credit without altering response-level scale.

To address these questions, we propose \emph{Unified Entropy-Calibrated Credit
Redistribution for GRPO} (UECR-GRPO). Its trajectory-level component,
\emph{Path-Utility Unification} (PUU), incorporates teacher evidence into
group-relative comparison. Viewing each response as an autoregressive path,
the sum of teacher-to-anchor token log-ratios gives an exact path log-density
ratio. We treat this quantity as an implicit teacher reward and combine it
with verifier utility in a single KL-regularized objective. The resulting
Gibbs optimum defines a target distribution shaped jointly by task reward,
teacher preference, and an anchor policy. In the on-policy implementation,
verifier reward and a length-normalized teacher score are combined before
group normalization, allowing teacher evidence to influence the sampled
response ranking before a clipped GRPO-style update. Verifier utility remains
explicit, and an offline sensitivity audit shows that our chosen setting lies
well within the empirically observed safe region for correct--incorrect
ordering reported in Appendix~\ref{app:alpha-sweep}.

However, trajectory unification alone does not resolve task-credit
localization without uncontrolled response-level scaling, because PUU still
assigns one scalar advantage to each response. We therefore introduce
\emph{Entropy-Calibrated Redistribution} (ECR). The signed log-probability gap
between the teacher and the old policy provides a direction for adjusting
verifier-derived credit, while the teacher's full-vocabulary entropy measures
predictive uncertainty and attenuates the guidance under diffuse next-token
distributions. A response-wise zero-sum projection redistributes the task
component while preserving its total credit and token-wise sign before
clipping. This allows teacher-preferred tokens in failed responses to receive
less negative task credit without becoming positive imitation targets,
separating local credit assignment from response-level scaling.

Our contributions are:
\begin{itemize}[leftmargin=*,topsep=2pt,itemsep=1pt]
  \item We formulate PUU through a joint KL-regularized objective that
  combines verifier utility and teacher-to-anchor improvement, and derive
  its Gibbs-optimal path distribution. Its on-policy implementation
  combines both signals before group normalization, allowing teacher
  evidence to influence response ranking.

  \item We introduce ECR to redistribute verifier-derived credit using
an entropy-attenuated teacher preference signal. A zero-sum projection
preserves each response's total task credit and token-wise sign before
clipping, separating token credit assignment from response-level scaling.

  \item Experiments on mathematical reasoning benchmarks with 1.7B and
  4B students show average accuracy gains of 0.89 and 0.56 percentage
  points over the strongest baseline, respectively.
\end{itemize}

%% file: content/related_work.tex
\section{Related Work}

\paragraph{RLVR and group-relative optimization.}
PPO uses a clipped policy-gradient surrogate \citep{schulman2017ppo}, while
GRPO derives advantages from within-prompt reward statistics
\citep{shao2024deepseekmath}, whose normalization affects the weighting of
successful and failed samples \citep{mroueh2025grpo}. DAPO improves clipping,
sampling, token-level loss reduction, and length handling
\citep{yu2025dapo}, and GSPO introduces sequence-level importance ratios and
clipping \citep{zheng2025gspo}. These methods refine policy optimization, but
token credit remains determined by sequence-level rewards.

\paragraph{On-policy distillation and joint RL--KD.}
Generalized Knowledge Distillation trains on student-generated sequences to
reduce train--inference mismatch \citep{agarwal2024gkd}. KDRL combines
reverse-KL distillation with rule-based rewards \citep{xu2025kdrl}, while
G-OPD formulates OPD as dense KL-regularized RL and decouples implicit-reward
strength from regularization \citep{yang2026gopd}. Reward-gated and sign-gated
variants use verifier information to decide when teacher supervision applies
\citep{akhondzadeh2026rgopd,xu2026sgopd}. PUU instead forms a shared response
utility before computing the group-relative advantage.

\paragraph{Teacher-guided credit assignment.}
Distilled RL uses clipped, geometrically normalized teacher-to-old-policy
ratios to reweight positive GRPO advantages, reverting to ordinary RL
otherwise \citep{wang2026distilledrl}. ECR instead handles both advantage
signs, calibrates guidance by teacher entropy, and preserves the arithmetic
sum of verifier-derived credit rather than the product of token weights.

ATOD combines annealed token-level OPD and GRPO in one clipped update, with
T-DUR reweighting turns using teacher--student disagreement and student
uncertainty \citep{tan2026atod}. Unlike PUU, its teacher evidence enters only
after group normalization and therefore does not affect the response rankings
or group statistics used to construct the GRPO advantage.

RLSD scales updates by teacher--student differences \citep{yang2026rlsd},
RLCSD contrasts correct- and wrong-hint teachers \citep{pan2026rlcsd}, and
StepOPSD redistributes supervision over action segments
\citep{zhang2026stepopsd}. TASPO uses mean-preserving privileged-information
weights at the action level
\citep{yang2026reconcilingprocesssupervisionoutcomebased}, SGCD constructs
sibling-based stepwise references \citep{ding2026sgcd}, and SC-GRPO
multiplicatively weights GRPO with self-conditioned KL
\citep{shan2026scgrpo}. ECR instead uses entropy-calibrated teacher gaps for
token-level redistribution while preserving each response's verifier-credit
total.

\paragraph{Teacher uncertainty.}
Entropy-Aware OPD addresses unstable reverse KL in high-entropy teacher
regions using forward KL \citep{jin2026eopd}. ATOD's T-DUR uses student
sampled-token surprisal to strengthen OPD supervision \citep{tan2026atod}.
ECR gates local adjustments with full-vocabulary teacher entropy and recovers
the broadcast PUU advantage as confidence vanishes.

\paragraph{KL-regularized control and inference.}
KL control represents a policy as a controlled change of measure from anchor
dynamics \citep{todorov2006lmdp}. Control-as-inference yields reward-tilted
trajectory distributions and connects optimal control with variational
inference \citep{levine2018inference,theodorou2010pathintegral}. PUU applies
this formulation to joint verifier and teacher utility, replacing the exact
path log-ratio sum with a token mean before group-relative normalization.

%% file: content/preliminaries_and_method.tex
\section{Preliminaries and Problem Formulation}
\label{sec:prelim}

\paragraph{Setting and notation.}
Given a prompt \(x\), a student policy generates a response
\(y=(y_1,\ldots,y_L)\).  The behavior policy \(\pi_\old\) collects the current
batch, \(\pi_\theta\) is the student being updated, and \(\pi_\T\) is a frozen
teacher.  A separate frozen \(\pi_{\mathrm{ref}}\), when enabled, regularizes
the actor update.  For response \(i\) and token \(t\),
\(s_{i,t}=(x,y_{i,<t})\), and \(m_{i,t}\) selects actor-controlled response
tokens while excluding prompts and padding.  We use \(P_0\) for the abstract
anchor path measure in the theoretical analysis.  In the on-policy algorithm
we instantiate \(P_0=P_\old\).

\subsection{GRPO: task-valid but sequence-level supervision}

For each prompt, \(\pi_\old\) samples \(G\) responses.  Given verifier reward
\(R_i^\task\), GRPO forms
\begin{equation}
A_i^{\mathrm{GRPO}}
=\frac{R_i^\task-\operatorname{Mean}_{j\le G}R_j^\task}
{\operatorname{Std}_{j\le G}(R_j^\task)+\epsilon},
\qquad y_i\sim\pi_{\old,T_r}(\cdot\mid x).
\label{eq:grpo-adv}
\end{equation}
Here, \(T_r\) denotes the rollout temperature. GRPO avoids a value model but
broadcasts one advantage across each response, preventing local credit
assignment. When group reward variance is zero, its verifier-derived update
vanishes entirely.

\subsection{OPD: dense preference on student-visited states}

At state \(s_{i,t}\), define the raw-temperature teacher-to-old-policy gap
\begin{equation}
\delta_{i,t}
=\log\pi_{\T,T=1}(y_{i,t}\mid s_{i,t})
-\log\pi_{\old,T=1}(y_{i,t}\mid s_{i,t}).
\label{eq:teacher-token-reward}
\end{equation}

The gap measures the teacher's preference for each sampled token relative
to the behavior policy. Here, \(\pi_\old\) denotes the policy that generated
the responses and remains fixed during the batch update, while
\(\pi_\theta\) is optimized. Dense preference is not process correctness.  A positive gap only says that
the teacher assigns more probability to the realized token than the old
student does.  It does not establish that the token is mathematically valid,
causally important, or part of a successful final response.

\subsection{Three ways of combining reward and teacher evidence}

The relevant methods differ mainly in \emph{when} teacher information enters
the update.

\begin{enumerate}[leftmargin=*,topsep=2pt,itemsep=1pt,parsep=0pt,partopsep=0pt]
\item \textbf{Independent objectives.}
A naive hybrid adds a clipped GRPO surrogate and an independently constructed
OPD surrogate. Because clipping is nonlinear, adding the two clipped losses
is not equivalent to combining their evidence first. Teacher evidence cannot
alter the reward ranking used to form the GRPO advantage.

\item \textbf{Post-normalization integration.}
ATOD combines an OPD token advantage with an already normalized GRPO advantage
before one clipped actor update \citep{tan2026atod}, while Distilled RL
multiplies the normalized GRPO advantage by a normalized teacher ratio
\citep{wang2026distilledrl}. These are integrated actor updates rather than
independent-loss baselines, but the verifier ranking is fixed before teacher
information enters.

\item \textbf{Pre-normalization trajectory unification.}
We instead combine verifier reward and teacher evidence into a single response
utility before computing group-relative advantages. This response-level
interpretation motivates the path-space formulation below. Token credit is
then handled separately by redistributing the verifier-derived component
within each response.
\end{enumerate}

\section{Unified Trajectory Utility and Constrained Token Credit}
\label{sec:method}

\paragraph{Overview.}
The method follows an ordered two-layer design.  PUU first combines verifier
reward and a response-level teacher improvement before group normalization,
yielding one trajectory advantage.  ECR then modifies only the placement of
the verifier-derived component, using teacher uncertainty and a zero-sum
constraint.  Separating these levels lets the experiments ask two distinct
questions: whether pre-normalization trajectory unification helps, and whether
constrained localization adds value beyond that unified objective.

\begin{figure*}
\centering
  \includegraphics[width=0.97\textwidth]{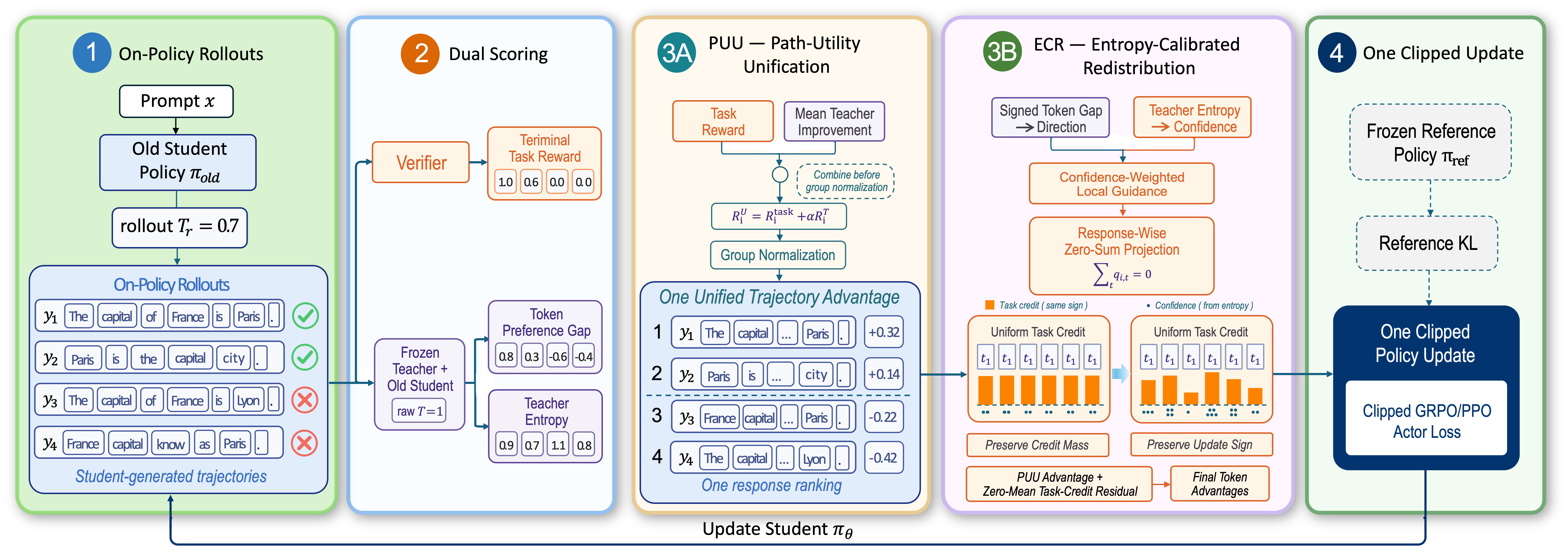}
\caption{Method overview. PUU combines verifier reward and teacher
improvement before group normalization. ECR then redistributes only the
verifier-derived component using a signed teacher gap, teacher entropy, and a
response-wise zero-sum projection.}
\label{fig:overview}
\end{figure*}

\subsection{PUU: Path-Utility Unification}
\label{sec:puu}

Let \(P_0\) and \(P_T\) be the anchor and teacher path distributions for a
fixed prompt.  Autoregressive factorization gives
\begin{equation}
\sum_{t=1}^{L}\log
\frac{\pi_T(y_t\mid x,y_{<t})}{\pi_0(y_t\mid x,y_{<t})}
=\log\frac{P_T(y\mid x)}{P_0(y\mid x)}.
\label{eq:path-ratio-main}
\end{equation}
Thus the summed OPD signal is an exact path log-density ratio rather than an
unrelated auxiliary loss.

For a candidate path distribution \(Q\), define
\begin{equation}
J(Q)=
\E_{y\sim Q}\!\left[
R_\task(x,y)+\alpha\log\frac{P_T(y\mid x)}{P_0(y\mid x)}
\right]
-\beta\KL\!\left(Q(\cdot\mid x)\,\|\,P_0(\cdot\mid x)\right).
\label{eq:path-objective}
\end{equation}
The verifier specifies task utility, the log-ratio measures teacher
improvement over the anchor, and the KL term controls departure from that
anchor.  The unique maximizer is
\begin{equation}
Q^*(y\mid x)\propto
\exp\!\left(\frac{R_\task(x,y)}{\beta}\right)
P_T(y\mid x)^{\alpha/\beta}
P_0(y\mid x)^{1-\alpha/\beta}.
\label{eq:gibbs}
\end{equation}
Setting \(\alpha=0\) recovers reward-regularized RL; setting
\(R_\task=0\) and \(\alpha=\beta\) recovers \(P_T\); and
\(R_\task=0,\alpha>\beta\) yields reward extrapolation beyond the teacher in
the teacher-over-anchor direction.  These are properties of the
distributional target.  They do not imply that the finite-sample clipped
algorithm is identical to each corresponding training procedure.

\subsubsection{On-policy group-relative realization}

Direct normalization over all language-model paths is intractable. At each
iteration, we set \(P_0=P_\old\), sample \(G\) responses from \(\pi_\old\),
and score their realized tokens with \(\pi_T\) and \(\pi_\old\). The teacher
term thus measures improvement over the behavior policy on the sampled
responses. The exact path reward in Eq.~\eqref{eq:path-ratio-main} is a token
sum whose magnitude grows mechanically with response length. We therefore use
the length-normalized surrogate
\begin{equation}
R_i^T=\frac{\sum_t m_{i,t}\delta_{i,t}}{\sum_t m_{i,t}},
\qquad
R_i^U=R_i^\task+\alpha R_i^T.
\label{eq:unified-reward}
\end{equation}
For variable-length responses, \(R_i^T\) is no longer an exact path
log-density ratio. It is a practical surrogate that prevents response length
from implicitly changing the scale of the teacher contribution.

PUU forms the joint utility before computing the group-relative advantage
\begin{equation}
\mu_U=\frac1G\sum_{j=1}^{G}R_j^U,\qquad
\sigma_U=\operatorname{Std}_{j=1}^{G}(R_j^U),\qquad
A_i^U=\frac{R_i^U-\mu_U}{\sigma_U+\epsilon}.
\label{eq:unified-advantage}
\end{equation}
Teacher evidence can therefore change the ordering among sampled responses,
while the combined utilities also determine the group mean and scale used by
the clipped update. In post-normalization hybrids, the verifier-derived
ordering has already been fixed when teacher evidence is introduced.

To prepare for token-level redistribution, we decompose this single unified
advantage into task and teacher components that share the same scale
\begin{equation}
A_i^{\task\mid U}=\frac{R_i^\task-\mu_\task}{\sigma_U+\epsilon},
\qquad
A_i^{T\mid U}=\frac{R_i^T-\mu_T}{\sigma_U+\epsilon}.
\label{eq:unified-components}
\end{equation}
Since centering is linear, substituting Eq.~\eqref{eq:unified-reward} into
Eq.~\eqref{eq:unified-advantage} gives
\(A_i^U=A_i^{\task\mid U}+\alpha A_i^{T\mid U}\) exactly. These are additive
components of one advantage rather than two independently normalized
advantages. Their shared denominator preserves the intended coefficient
\(\alpha\), whereas separate standardization would make its effective value
vary across groups. For verifier-degenerate groups, the unified advantage
approximately reduces to a group-standardized response-level OPD score. This
decomposition allows ECR to redistribute verifier-derived credit while
leaving the teacher component unchanged.

\subsection{ECR: Entropy-Calibrated Redistribution}
\label{sec:ecr}

Figure~\ref{fig:token-credit-diagnosis} examines token-level teacher
signals at initialization. We sample eight responses for each of 300 training prompts from the 1.7B student, yielding 2,400 responses, and evaluate every realized token under the frozen 4B teacher and the behavior policy. When tokens are grouped by teacher entropy, high-entropy prefixes assign lower probability to the realized token and contain most large teacher--old-policy log-probability gaps. Without response-wise centering, using these local signals as token weights also moves their arithmetic mean away from one. This changes the total verifier-derived credit instead of only redistributing it across tokens. The results motivate separate mechanisms for calibrating teacher uncertainty and preserving task credit.

\begin{figure*}[t]
\centering
\includegraphics[width=0.97\textwidth]
{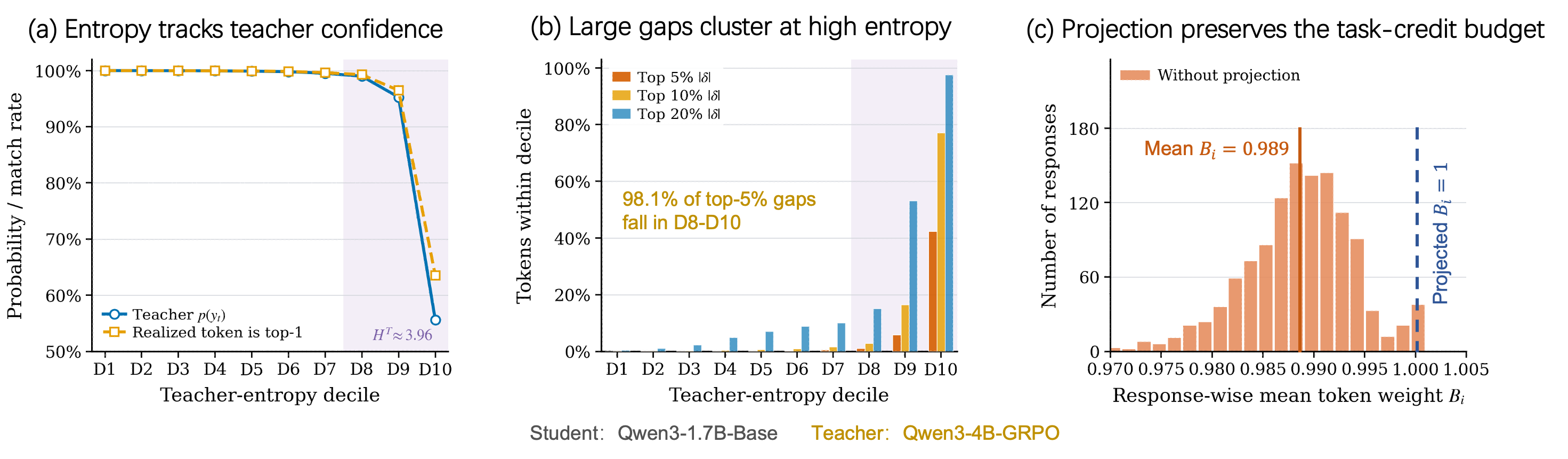}
\caption{Initialization-time token-credit diagnosis. (a) relates teacher
entropy to the probability and top-1 rank of the realized token. (b) shows
where tokens with large absolute teacher--old-policy gaps occur across
entropy deciles. (c) reports the response-wise mean token weight before
projection, while zero-sum projection fixes this mean at one.}
\label{fig:token-credit-diagnosis}
\end{figure*}

\subsubsection{Direction and confidence}

PUU ranks responses but still broadcasts one scalar to all their tokens. Let
\(s_i=\operatorname{sign}(A_i^{\task\mid U})\). We construct a bounded local
direction from the teacher--old-policy gap and calibrate it using the
teacher's uncertainty
\begin{equation}
d_{i,t}=\tanh\!\left(\frac{s_i\delta_{i,t}}{2\tau_\delta}\right),
\qquad
c_{i,t}=\exp\!\left(-\frac{H^T_{i,t}}{\tau_H}\right).
\label{eq:local-signal}
\end{equation}
Here, \(H^T_{i,t}\) is the frozen teacher's full-vocabulary entropy at the
current prefix. The positive scale \(\tau_\delta\) controls how quickly the
direction saturates as the gap grows, while \(\tau_H\) controls the strength
of entropy attenuation. A larger gap increases \(|d_{i,t}|\), whereas higher
teacher entropy decreases \(c_{i,t}\).

\begin{wraptable}{r}{0.34\columnwidth}
\vspace{-10pt}
\centering
\caption{ECR credit directions.}
\label{tab:ecr-four-cases}
\vspace{2pt}
\scriptsize
\setlength{\tabcolsep}{2.8pt}
\renewcommand{\arraystretch}{1.08}
\begin{tabular}{@{}c@{\hspace{5pt}}cc@{}}
\toprule
& \(\delta_{i,t}>0\) & \(\delta_{i,t}<0\) \\
\midrule
\(A_i^{\task\mid U}>0\)
& \textcolor{ecrpositive}{\(\uparrow\)}
& \textcolor{ecrpositive}{\(\downarrow\)} \\[4pt]
\(A_i^{\task\mid U}<0\)
& \textcolor{ecrnegative}{\(\uparrow\)}
& \textcolor{ecrnegative}{\(\downarrow\)} \\
\bottomrule
\end{tabular}
\vspace{-6pt}
\end{wraptable}

For \(A_i^{\task\mid U}>0\), teacher-preferred tokens have positive local
direction. When \(A_i^{\task\mid U}<0\), \(s_i\) reverses this direction, so
teacher-preferred tokens can receive less negative task credit without
becoming unconditional imitation targets. Entropy affects only the strength
of this proposal and does not determine its direction or correctness.

\subsubsection{Zero-sum response-wise projection}

Applying the local signal directly as a token weight would change both the
allocation and the total amount of task credit as shown in Figure~\ref{fig:token-credit-diagnosis}. We instead subtract its confidence-weighted response mean
\begin{equation}
\mu_i^c=
\frac{\sum_t m_{i,t}c_{i,t}d_{i,t}}
{\sum_t m_{i,t}c_{i,t}},\qquad
q_{i,t}=\tfrac12m_{i,t}c_{i,t}(d_{i,t}-\mu_i^c),
\qquad
w_{i,t}=1+\rho q_{i,t},\quad 0\le\rho<1.
\label{eq:projection-weights}
\end{equation}
Thus, \(q_{i,t}\) measures each token's direction relative to the
confidence-weighted response baseline. Tokens above this baseline receive
larger task weights and those below it receive smaller weights, while \(\rho\)
controls the overall redistribution strength.

For every non-empty response,
\(\sum_{t:m_{i,t}=1}q_{i,t}=0\), so the arithmetic mean of \(w_{i,t}\) over
valid tokens remains exactly one. Moreover, \(|q_{i,t}|\le1\) and
\(\rho<1\) keep all weights positive. ECR therefore preserves both the
response-wise additive mass and token-wise sign of the task component,
although the full PUU advantage may still have a different sign.


\subsubsection{Final actor advantage}

The final advantage adds the zero-mean redistribution residual to the PUU
advantage
\begin{equation}
A_{i,t}^{\mathrm{final}}
=A_i^U+\rho A_i^{\task\mid U}q_{i,t}
=A_i^{\task\mid U}(1+\rho q_{i,t})
+\alpha A_i^{T\mid U}.
\label{eq:final-advantage}
\end{equation}
This form makes clear that ECR redistributes only the verifier-derived
component, while the teacher component introduced by PUU remains unchanged.
The actor minimizes
\begin{equation}
\mathcal L
=\mathcal L_{\mathrm{PG}}(A^{\mathrm{final}})
+\beta_{\mathrm{ref}}\mathcal L_{\mathrm{low\mbox{-}var\mbox{-}KL}}.
\label{eq:final-objective}
\end{equation}
The reference KL enters only through the actor objective and is not included
in either response reward. When \(\rho=0\), the method reduces exactly to
PUU. The same reduction occurs when confidence is negligible or all local
directions are identical, since the projected residual then vanishes.

These properties address the two observations in
Figure~\ref{fig:token-credit-diagnosis}. Entropy calibration limits the
influence of large teacher gaps at uncertain prefixes, while the zero-sum
projection redistributes token credit without changing the response-wise
task-credit budget.

\begin{table}[t]
\centering
\scriptsize
\setlength{\tabcolsep}{3.2pt}
\setlength{\belowcaptionskip}{5pt}
\renewcommand{\arraystretch}{1.08}
\caption{Main mathematical reasoning results for Qwen3 students. Block
headers specify the student and frozen teacher. All values are
\(\mathrm{Avg@12}\) accuracy (\%) under a fixed 12-sample decoding setup.
The average weights all five benchmarks equally. A dash denotes an
unevaluated setting.}
\label{tab:main}

\resizebox{\linewidth}{!}{%
\begin{tabular}{@{}lcccccc@{}}
\toprule
\multirow{3}{*}{Method}
& \multicolumn{6}{c}{\(\mathrm{Avg@12}\) accuracy (\%)}\\
\cmidrule(l){2-7}
& AIME24
& AIME25
& AMC23
& \shortstack{HMMT25\\Feb}
& \shortstack{HMMT25\\Nov}
& Avg.\\
\midrule

\multicolumn{7}{@{}l}{%
\textbf{Qwen3-1.7B-Base student} /
\textit{Qwen3-4B-GRPO teacher}}\\
\addlinespace[2pt]

Initial student \citep{yang2025qwen3}
& 1.53
& 1.75
& 12.02
& 0.00
& 1.94
& 3.45\\

Vanilla-GRPO \citep{shao2024deepseekmath,mroueh2025grpo}
& 7.78
& 6.39
& 37.50
& 0.28
& \textbf{5.28}
& 11.45\\

Vanilla-PG-OPD \citep{agarwal2024gkd}
& 9.17
& 7.22
& 39.69
& 0.28
& \underline{4.72}
& 12.22\\

Naive-GRPO+PG-OPD
& 7.50
& 6.94
& 38.54
& 0.28
& 3.89
& 11.43\\

Distilled RL \citep{wang2026distilledrl}
& 14.44
& \underline{9.31}
& 48.23
& \underline{4.44}
& 4.17
& 16.12\\

ATOD-aligned \citep{tan2026atod}
& \underline{14.72}
& 9.17
& \underline{50.21}
& 4.17
& 3.33
& \underline{16.32}\\

\rowcolor{oursgray}
\ours{}
& \textbf{15.24}
  \textcolor{improvegreen}{\scriptsize \(+0.52\)}
& \textbf{9.59}
  \textcolor{improvegreen}{\scriptsize \(+0.28\)}
& \textbf{52.04}
  \textcolor{improvegreen}{\scriptsize \(+1.83\)}
& \textbf{4.72}
  \textcolor{improvegreen}{\scriptsize \(+0.28\)}
& 4.44\textcolor{improvegreen}{\scriptsize \(-0.84\)}
& \textbf{17.21}
  \textcolor{improvegreen}{\scriptsize \(+0.89\)}\\

\midrule

\multicolumn{7}{@{}l}{%
\textbf{Qwen3-4B student} /
\textit{Qwen3-8B-Math-GRPO teacher}}\\
\addlinespace[2pt]

Initial student \citep{yang2025qwen3}
& 20.00
& 18.33
& 64.58
& --
& --
& --\\

Vanilla-GRPO \citep{shao2024deepseekmath}
& 66.67
& 57.22
& 94.79
& 36.67
& 44.44
& 59.96\\

Vanilla-PG-OPD \citep{agarwal2024gkd}
& 65.56
& 53.89
& 94.17
& 31.94
& 42.50
& 57.61\\

Naive-GRPO+PG-OPD
& 65.56
& 55.83
& 93.54
& 33.61
& 44.72
& 58.65\\

Distilled RL \citep{wang2026distilledrl}
& 68.33
& \textbf{67.50}
& 97.08
& \underline{40.83}
& \textbf{48.89}
& \underline{64.53}\\

ATOD-aligned \citep{tan2026atod}
& \underline{71.11}
& 63.61
& \textbf{97.92}
& 40.28
& 48.06
& 64.20\\

\rowcolor{oursgray}
\ours{}
& \textbf{71.39}
  \textcolor{improvegreen}{\scriptsize \(+0.28\)}
& \underline{66.94}\textcolor{improvegreen}{\scriptsize \(-0.56\)}
& \underline{97.78}\textcolor{improvegreen}{\scriptsize \(-0.14\)}
& \textbf{41.01}
  \textcolor{improvegreen}{\scriptsize \(+0.18\)}
& \underline{48.33}\textcolor{improvegreen}{\scriptsize \(-0.56\)}
& \textbf{65.09}
  \textcolor{improvegreen}{\scriptsize \(+0.56\)}\\

\bottomrule
\end{tabular}%
}
\end{table}

%% file: content/experiments.tex
\section{Experiments}
\label{sec:experiments}

We evaluate whether PUU improves trajectory ranking and whether ECR further
improves token credit assignment. Main comparisons use a Qwen3-1.7B student
with a Qwen3-4B-GRPO teacher and a Qwen3-4B student with a
Qwen3-8B-Math-GRPO teacher. Component ablations use only the Qwen3-1.7B
student and Qwen3-4B-GRPO teacher.

\subsection{Experimental Setup}


\paragraph{Models and training.}
The 1.7B setting initializes Qwen3-1.7B-Base and trains it for 515 steps on
difficulty-5--7 problems from DeepMath-103K, using a frozen Qwen3-4B-GRPO
teacher and \(G=8\) responses per prompt \citep{yang2025qwen3}. The larger
setting pairs a Qwen3-4B student with a frozen Qwen3-8B-Math-GRPO teacher and
trains for 160 steps on a sampled difficulty-6--8 subset. Both models use
non-thinking mode, with a global prompt batch size of 126 and \(G=8\).
Within each scale, all methods share the data, decoding, optimizer, clipping,
and reference-KL settings. Rollouts and PPO importance ratios use
\(T_r=0.7\), while teacher and old-policy scores in the teacher gap use raw
\(T=1\). Appendices~\ref{app:experiments} and~\ref{app:actor-update} provide
the remaining settings and actor-loss conventions.

\paragraph{Baselines.}
We compare Vanilla GRPO \citep{shao2024deepseekmath,mroueh2025grpo},
Vanilla PG-OPD \citep{agarwal2024gkd}, a naive sum of independently clipped
GRPO and PG-OPD losses, Distilled RL \citep{wang2026distilledrl}, and an
aligned single-response adaptation of ATOD \citep{tan2026atod}. All
teacher-assisted methods use the same frozen scorer within each model scale.

\paragraph{Evaluation.}
We report \(\mathrm{Avg@12}\) accuracy on AIME 2024, AIME 2025, AMC 2023,
and the February and November 2025 HMMT contests under a fixed 12-sample
decoding setup. The average weights all five benchmarks equally.

\subsection{Main Results}
\paragraph{Results.}
Table~\ref{tab:main} shows that our method achieves the highest
five-benchmark average at both model scales. For the 1.7B student, it leads
on four benchmarks, exceeding the strongest baseline by 0.52, 0.28, 1.83,
and 0.28 points on AIME24, AIME25, AMC23, and HMMT25-Feb, respectively.
Vanilla GRPO leads on HMMT25-Nov by 0.84 points. Our average reaches 17.21\%,
surpassing ATOD-aligned by 0.89 points. For the 4B student, our method leads
on AIME24 and HMMT25-Feb by 0.28 and 0.18 points, while trailing the best
baseline on AIME25, AMC23, and HMMT25-Nov by 0.56, 0.14, and 0.56 points.
Its average of 65.09\% nevertheless exceeds Distilled RL by 0.56 points,
showing that the aggregate gain does not imply uniform improvement. Training dynamics for both model scales are provided in
Appendix~\ref{app:experiments}. In the 1.7B runs
(Figure~\ref{fig:qwen17b-training}), our method achieves higher evaluation
accuracy than pure OPD and the naive hybrid without ATOD's large initial
gradient norm; the corresponding 4B curves appear in
Figure~\ref{fig:qwen4b-training}.

\begin{figure*}[t]
\centering
\includegraphics[width=0.97\textwidth,trim=0 50bp 0 7bp,clip]
{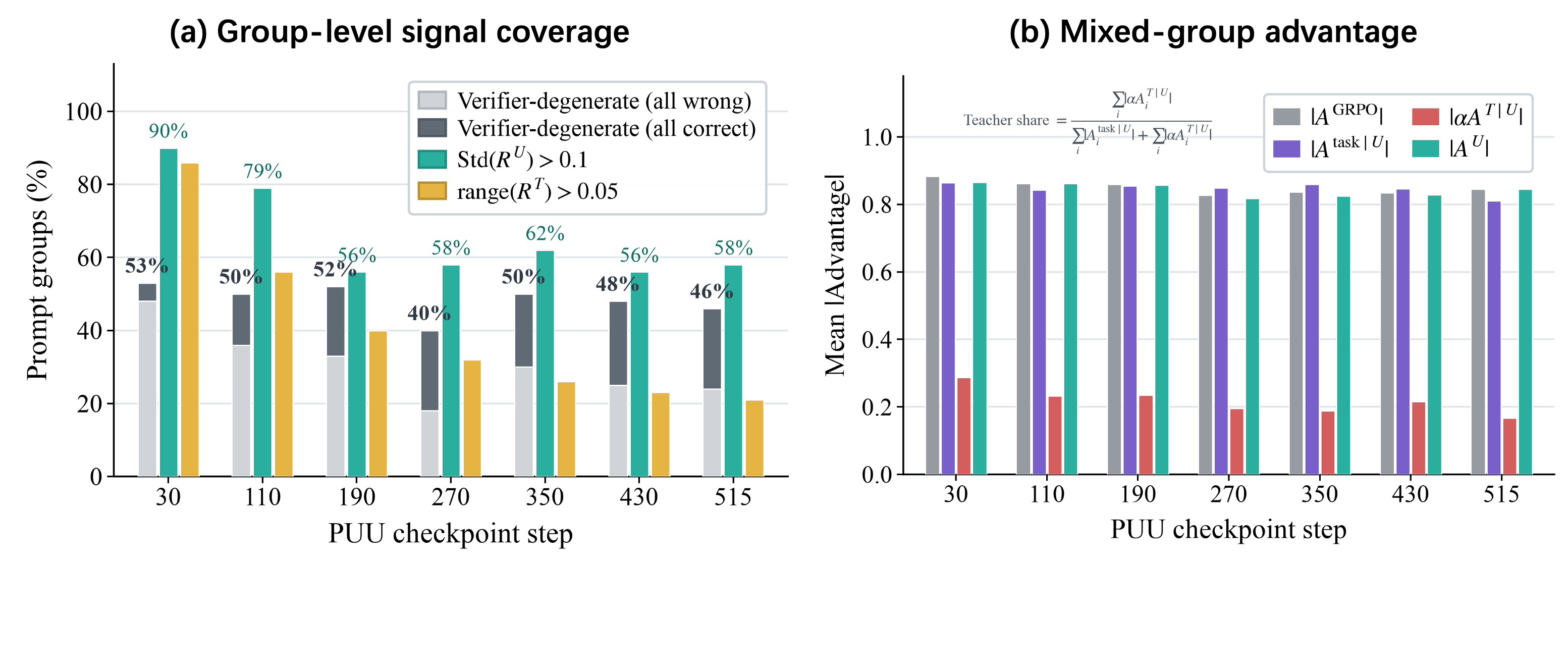}
\caption{PUU signal analysis on AMC 2023. Twelve responses to each of
40 problems form pseudo-groups of eight and four. (a) Fractions of
verifier-degenerate groups and groups with
\(\operatorname{Std}(R^U)>0.1\) or
\(\operatorname{range}(R^T)>0.05\).
(b) Mean absolute PUU advantage components in mixed groups.}
\label{fig:puu-group-signal}
\end{figure*}

\subsection{Ablation Studies}

We isolate trajectory construction from token redistribution with the
Qwen3-1.7B student and frozen Qwen3-4B-GRPO teacher
(Table~\ref{tab:component-ablation}). All training variants share the data,
initialization, rollout, and optimizer settings, with \(\alpha=1.0\).

\paragraph{Trajectory-level normalization.}
With ECR disabled, Task only uses verifier reward, Separate norm. normalizes
task and teacher advantages independently, and PUU instead normalizes their
joint response utility. Figure~\ref{fig:puu-group-signal} examines the
resulting signals offline. Verifier rewards are tied within 40--53\% of the
pseudo-groups, whereas 56--90\% of all groups retain nontrivial variation in
the unified utility. In mixed groups, the teacher component accounts for
17--25\% of the summed absolute component magnitude, while the unified
advantage remains close in scale to verifier-only GRPO. PUU therefore adds
response-level resolution without allowing the teacher term to dominate the
task component in this audit. These magnitudes are neither gradient nor
performance contributions. Appendix~\ref{app:puu-signal-audit} gives the full
protocol and discusses its offline construction. A separate fixed-rollout
\(\alpha\) sweep finds no correct--incorrect ordering inversions at the trained
setting \(\alpha=1\) over 451 mixed AMC 2023 groups and 22,811 response pairs;
this is an empirical observation on held-out rollouts, not a guarantee or a
retraining-based sensitivity result (Appendix~\ref{app:alpha-sweep}).

\paragraph{Token-level redistribution.}
Holding PUU fixed, the lower block of Table~\ref{tab:component-ablation}
shows that Full ECR outperforms both component removals.
Figure~\ref{fig:ecr-token-localization} provides a separate
offline diagnosis of where ECR places task credit. On failed responses from
mixed verifier groups, blinded Opus 4.8 labels identify valid and erroneous
reasoning steps. Full ECR yields a positive invalid-minus-valid task-weight
gap and exceeds all 100 controls that shuffle entropy positions
within each response. Thus, ECR relatively protects judge-valid reasoning
while concentrating negative task credit on identified errors, and its
zero-sum projection preserves the response-wise budget. This establishes the
intended localization behavior rather than a causal accuracy gain; the
protocol and statistics appear in Appendix~\ref{app:ecr-localization}.

\begin{figure*}[t]
\centering
\includegraphics[width=\textwidth,trim=0 0 0 7bp,clip]
{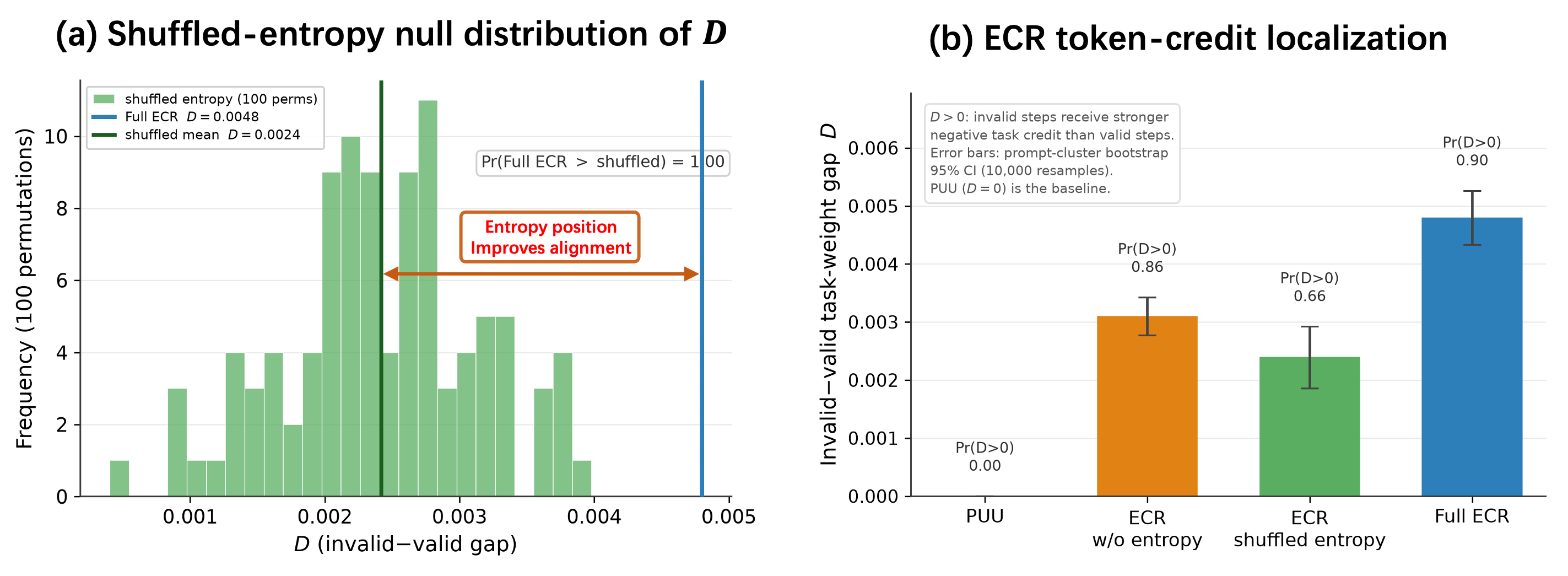}
\caption{Offline ECR token-credit localization at initialization.
(a) Invalid-minus-valid weight gaps under 100 within-response entropy
shuffles; the original entropy positions outperform all shuffles.
(b) Gaps for PUU and three ECR variants with 95\% prompt-cluster bootstrap
intervals. Labels give the fraction of responses with \(D_i>0\), indicating
stronger negative credit on invalid steps.}
\label{fig:ecr-token-localization}
\end{figure*}

\begin{table}[t]
\centering
\scriptsize
\setlength{\tabcolsep}{2.5pt}
\setlength{\belowcaptionskip}{5pt}
\renewcommand{\arraystretch}{1.12}
\caption{Internal ablations on Qwen3-1.7B with a frozen Qwen3-4B-GRPO
teacher. The first row group varies trajectory construction with ECR disabled;
the second fixes the PUU trajectory advantage and varies ECR token-credit
components. The \emph{+ PUU} row (Avg.\ 20.28) is the no-ECR baseline that the
Token-credit rows build upon (denoted \emph{PUU only $+$}). \(\checkmark\)/\(\times\)
mark which components are active.}
\label{tab:component-ablation}
\begin{tabular}{@{}l ccc ccc cccc@{}}
\toprule
\multirow{2}{*}{Variant}
& \multicolumn{3}{c}{Trajectory comp.}
& \multicolumn{3}{c}{ECR comp.}
& \multicolumn{4}{c}{Avg@12 (\%)}\\
\cmidrule(lr){2-4}\cmidrule(lr){5-7}\cmidrule(lr){8-11}
& Task & Teacher & Joint norm. & Gap & Entropy & Proj. & AIME24 & AIME25 & AMC23 & Avg.\\
\midrule
\multicolumn{11}{@{}l}{\textbf{Trajectory-level} \quad \textit{ECR disabled}}\\
Task only               & \(\checkmark\) & \(\times\)     & \(\times\)     & \(\times\) & \(\times\) & \(\times\) & 7.78  & 6.39  & 37.50 & 17.22\\
+ Separate norm.        & \(\checkmark\) & \(\checkmark\) & \(\times\)     & \(\times\) & \(\times\) & \(\times\) & 8.50  & 7.13  & 38.94 & 18.19\\
\rowcolor{oursgray}+ PUU & \(\checkmark\) & \(\checkmark\) & \(\checkmark\) & \(\times\) & \(\times\) & \(\times\) & 12.32 & 8.02  & 40.50 & 20.28\\
\midrule
\multicolumn{11}{@{}l}{\textbf{Token-credit} \quad \textit{PUU baseline $+$ ECR variants}}\\
PUU only $+$ w/o entropy    & \(\checkmark\) & \(\checkmark\) & \(\checkmark\) & \(\checkmark\) & \(\times\)     & \(\checkmark\) & 12.64 & 8.24  & 42.46 & 21.11\\
PUU only $+$ w/o projection & \(\checkmark\) & \(\checkmark\) & \(\checkmark\) & \(\checkmark\) & \(\checkmark\) & \(\times\)     & 15.04 & 9.17  & 51.32 & 25.17\\
\rowcolor{oursgray}Full \ours{} & \(\checkmark\) & \(\checkmark\) & \(\checkmark\) & \(\checkmark\) & \(\checkmark\) & \(\checkmark\) & 15.24 & 9.59 & 52.04 & 25.62\\
\bottomrule
\end{tabular}
\end{table}

%% file: content/conclusion.tex
\section{Conclusion}

UECR-GRPO separates trajectory ranking from token credit assignment. PUU
combines verifier and teacher utility before group normalization, while ECR
uses signed teacher gaps and entropy to redistribute verifier-derived credit
without changing each response's task-credit budget. Experiments across five benchmarks and two Qwen3 model scales show that
UECR-GRPO achieves higher average accuracy than the compared training
objectives, while offline diagnostics support the intended group-level
resolution and token localization.

%% file: content/appendix.tex
\section{Clipped Actor Update and Temperature Semantics}
\label{app:actor-update}

Let \(m_{i,t}\) select valid response tokens and let \(A_{i,t}\) be any
detached token advantage.  The current-to-old ratio and clipped policy
surrogate are
\begin{equation}
\begin{aligned}
r_{i,t}(\theta)
&=\exp\!\left(
\log\pi_{\theta,T_r}(y_{i,t}\mid s_{i,t})
-\log\pi_{\old,T_r}(y_{i,t}\mid s_{i,t})
\right),\\
\mathcal L_{\mathrm{PG}}(A)
&=-\operatorname{Reduce}_{m_{i,t}=1}
\min\!\left[
r_{i,t}(\theta)A_{i,t},
\clip(r_{i,t}(\theta),1-\epsilon_\ell,1+\epsilon_h)A_{i,t}
\right].
\end{aligned}
\label{eq:policy-loss}
\end{equation}
Both actor log probabilities use the rollout convention \(T_r=0.7\).  The
teacher gap instead compares \(\pi_T\) and \(\pi_\old\) at raw \(T=1\).
Prompt tokens, response padding, and synthetic data-parallel padding rows are
excluded.  The reference-policy KL, when enabled, is added exactly once to
the actor loss and is not included in task reward, teacher reward, or the PUU
group statistics.

\section{Path-Space Target}
\label{app:path-theory}

\subsection{Setup and support assumptions}

Fix a prompt \(x\) and suppress it where unambiguous.  Let \(\mathcal Y\) be
the set of finite response paths.  The anchor \(P_0\), teacher \(P_T\), and
candidate distribution \(Q\) are causal autoregressive path measures.  We
assume \(P_T(y)>0\) and \(Q(y)>0\) only where \(P_0(y)>0\), \(\beta>0\), and a
finite partition function.  These absolute-continuity conditions ensure that
the density ratios and KL divergences are defined.

\begin{lemma}[Token log-ratios telescope to a path log-ratio]
\label{lem:path-ratio}
For any \(y=(y_1,\ldots,y_L)\),
\[
\sum_{t=1}^{L}
\log\frac{\pi_T(y_t\mid x,y_{<t})}
{\pi_0(y_t\mid x,y_{<t})}
=\log\frac{P_T(y\mid x)}{P_0(y\mid x)}.
\]
\end{lemma}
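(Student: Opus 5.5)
The plan is to use the defining property of a causal autoregressive path measure: for a fixed prompt \(x\) and a path \(y=(y_1,\ldots,y_L)\), the path probability is the product of its next-token conditionals,
\[
P_T(y\mid x)=\prod_{t=1}^{L}\pi_T(y_t\mid x,y_{<t}),\qquad
P_0(y\mid x)=\prod_{t=1}^{L}\pi_0(y_t\mid x,y_{<t}).
\]
This is just the chain rule applied to the joint distribution over finite sequences. If termination is modeled by an end-of-sequence token, that token is simply counted as \(y_L\), so the identity holds on \(\mathcal Y\) without any additional stopping factor. I will state this convention explicitly, since it is what makes the formula an identity on the set of finite paths rather than on fixed-length prefixes.

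The argument then has three steps. First, restrict to paths \(y\) with \(P_0(y\mid x)>0\); this is the only case in which the right-hand side is finite and meaningful. Here \(P_0(y\mid x)>0\) forces every factor \(\pi_0(y_t\mid x,y_{<t})\) to be positive, so every denominator in the token sum is nonzero. Second, on the same set, use the standing absolute-continuity assumption \(P_T\ll P_0\). If \(P_T(y\mid x)>0\), every teacher factor is positive and all terms are finite reals. If instead \(P_T(y\mid x)=0\), then at least one teacher factor vanishes, and both sides equal \(-\infty\) under the usual convention \(\log 0=-\infty\); no term on the left is \(+\infty\), because all denominators are positive. Third, take logarithms of the ratio of the two products. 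Since the ratio of products equals the product of ratios and \(\log\) turns products into sums, the sum of token log-ratios equals \(\log\bigl(P_T(y\mid x)/P_0(y\mid x)\bigr)\).

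The only step needing any care, and the main potential obstacle, is bookkeeping of the support conditions and the termination convention rather than any real computation. Paths with \(P_0(y\mid x)=0\) lie outside the support where the ratio is defined, and they are excluded by the assumptions in the setup of Appendix~\ref{app:path-theory}. I would also remark that the identity is exact for each fixed \(y\), whatever its length \(L\). The length-normalized surrogate \(R_i^T\) in Eq.~\eqref{eq:unified-reward} is no longer exact precisely because dividing by \(\sum_t m_{i,t}\) breaks this telescoping, which matches the paper's comment after that equation.
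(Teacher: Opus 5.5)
Your proposal is correct and follows the same argument as the paper: factorize \(P_T\) and \(P_0\) autoregressively, divide the two products, and take logarithms. Your extra bookkeeping makes explicit what the paper's two-line proof leaves implicit in its absolute-continuity setup: restricting to \(P_0(y\mid x)>0\), treating \(P_T(y\mid x)=0\) with \(\log 0=-\infty\), and counting the end-of-sequence token as \(y_L\).
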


\begin{proof}
Autoregressive factorization gives
\(P_T(y\mid x)=\prod_t\pi_T(y_t\mid x,y_{<t})\) and the analogous identity
for \(P_0\).  Dividing the products and taking a logarithm gives the result.
\end{proof}

Define
\[
U(y)=R_\task(y)+\alpha\log\frac{P_T(y)}{P_0(y)}
\]
and consider
\begin{equation}
\max_Q\left\{\E_{y\sim Q}[U(y)]-\beta\KL(Q\|P_0)\right\}.
\label{eq:appendix-control-objective}
\end{equation}

\begin{proposition}[Gibbs-optimal path distribution]
\label{prop:gibbs}
Let
\[
Z=\E_{y\sim P_0}\!\left[\exp\!\left(\frac{U(y)}{\beta}\right)\right].
\]
The unique maximizer of Eq.~\eqref{eq:appendix-control-objective} is
\[
Q^*(y)
=\frac{1}{Z}P_0(y)\exp\!\left(\frac{U(y)}{\beta}\right)
=\frac{1}{Z}\exp\!\left(\frac{R_\task(y)}{\beta}\right)
P_T(y)^{\alpha/\beta}P_0(y)^{1-\alpha/\beta}.
\]
\end{proposition}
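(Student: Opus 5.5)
The plan is to use the Donsker--Varadhan / Gibbs variational identity: rewrite the objective as a negative KL divergence to \(Q^*\) plus a constant. By the support assumptions and \(Z<\infty\), the density \(Q^*(y)=P_0(y)\exp(U(y)/\beta)/Z\) is well defined on \(\{P_0>0\}\). Note that \(Z>0\), since \(\exp(U/\beta)>0\) wherever \(P_0>0\). Summing over \(\mathcal Y\) then shows \(Q^*\) is a probability distribution, absolutely continuous with respect to \(P_0\).

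Next, take any \(Q\ll P_0\) for which \(\E_Q[U]\) is finite. Candidates with \(\KL(Q\|P_0)=\infty\) give an objective of \(-\infty\), or an undefined value, and can be excluded. Expanding the KL term gives
\[
\E_Q[U]-\beta\,\E_Q\!\left[\log\frac{Q}{P_0}\right]
=-\beta\,\E_Q\!\left[\log\frac{Q}{P_0\exp(U/\beta)}\right].
\]
Substituting \(P_0\exp(U/\beta)=ZQ^*\) turns this into \(\beta\log Z-\beta\KL(Q\|Q^*)\).

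Since \(\beta>0\) and \(\KL(Q\|Q^*)\ge 0\), with equality if and only if \(Q=Q^*\) (Gibbs' inequality on the countable set \(\mathcal Y\)), the maximum value is \(\beta\log Z\). It is attained exactly at \(Q^*\), which gives both existence and uniqueness. One still has to check that \(Q^*\) is admissible, i.e.\ that \(\E_{Q^*}[U]\) and \(\KL(Q^*\|P_0)\) are finite. For this I would either add a mild integrability assumption or argue directly that the objective at \(Q^*\) equals \(\beta\log Z\). The direct argument uses \(\log(Q^*/P_0)=U/\beta-\log Z\), so \(\E_{Q^*}[U]-\beta\KL(Q^*\|P_0)=\beta\log Z\) whenever the expectation exists.

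Finally, the second expression follows by substituting \(U=R_\task+\alpha\log(P_T/P_0)\) into \(\exp(U/\beta)\). This gives \(\exp(R_\task/\beta)\,(P_T/P_0)^{\alpha/\beta}\), and multiplying by \(P_0\) yields \(\exp(R_\task/\beta)P_T^{\alpha/\beta}P_0^{1-\alpha/\beta}\) on \(\{P_0>0\}\). Lemma~\ref{lem:path-ratio} justifies treating the token-sum form and the path-ratio form interchangeably. One caveat concerns paths with \(P_T(y)=0<P_0(y)\): there \(U=-\infty\) and both sides vanish when \(\alpha>0\), using the convention \(\exp(-\infty)=0\).

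I expect the main obstacle to be this measure-theoretic bookkeeping rather than the algebra. The issues are handling \(U=\pm\infty\), making sure the KL-decomposition step is valid (no \(\infty-\infty\)) for every admissible \(Q\), and confirming that \(Q^*\) itself has a finite objective. I would first try to restrict to \(Q\) with \(\KL(Q\|P_0)<\infty\) and \(\E_Q|U|<\infty\), and state this as the admissible class.
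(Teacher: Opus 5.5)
Your proposal is correct and follows the paper's proof: both rewrite the objective as $\beta\log Z-\beta\KL(Q\|Q^*)$, use nonnegativity of KL (with equality iff $Q=Q^*$) for existence and uniqueness, and substitute $U$ to get the second form. Your extra bookkeeping addresses integrability points that the paper's proof passes over under its ``feasible $Q$'' and finite-partition-function assumptions: restricting to an admissible class that avoids $\infty-\infty$, and checking that $\KL(Q^*\|P_0)<\infty$.
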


\begin{proof}
The first expression is normalized by \(Z\), and
\[
\log\frac{Q^*(y)}{P_0(y)}=\frac{U(y)}{\beta}-\log Z.
\]
For any feasible \(Q\),
\begin{align*}
\KL(Q\|Q^*)
&=\E_Q\!\left[
\log\frac{Q(y)}{P_0(y)}-\frac{U(y)}{\beta}+\log Z
\right],\\
\E_Q[U]-\beta\KL(Q\|P_0)
&=\beta\log Z-\beta\KL(Q\|Q^*).
\end{align*}
Non-negativity of KL gives a unique optimum at \(Q=Q^*\) almost everywhere.
The second form follows by substituting the verifier and teacher terms of \(U\).
\end{proof}

\begin{corollary}[Target-level reductions]
\label{cor:special-cases}
The optimum has the following reductions:
\begin{enumerate}[leftmargin=*,topsep=2pt,itemsep=1pt]
  \item If \(\alpha=0\), then
  \(Q^*\propto P_0\exp(R_\task/\beta)\), the standard
  KL-regularized reward tilt.
  \item If \(R_\task=0\) and \(\alpha=\beta\), then \(Q^*=P_T\).
  \item If \(R_\task=0\) and \(\alpha>\beta\), then
  \(Q^*\propto P_T^{\alpha/\beta}P_0^{1-\alpha/\beta}\), an extrapolation in
  the teacher-over-anchor reward direction.
  \item With both terms active, task reward and teacher preference tilt the same target.
\end{enumerate}
\end{corollary}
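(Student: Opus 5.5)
The plan is to obtain every item by substituting into the closed form of Proposition~\ref{prop:gibbs},
\[
Q^*(y)=\frac{1}{Z}\exp\!\left(\frac{R_\task(y)}{\beta}\right)P_T(y)^{\alpha/\beta}P_0(y)^{1-\alpha/\beta},
\]
while tracking the normalizer \(Z\) separately in each case. No new optimization argument is needed, since uniqueness and the Gibbs form are already established. Each reduction is just a simplification of the exponents, followed by identifying \(Z\).

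For item~1, set \(\alpha=0\). Then \(U=R_\task\), the teacher factor \(P_T^{0}\) equals one, and the anchor exponent is \(1\). This gives \(Q^*\propto P_0\exp(R_\task/\beta)\) with \(Z=\E_{P_0}[e^{R_\task/\beta}]\), which is the standard reward-tilted target.

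For item~2, set \(R_\task=0\) and \(\alpha=\beta\). The exponents become \(1\) on \(P_T\) and \(0\) on \(P_0\), so \(Q^*\propto P_T\). To get equality rather than mere proportionality, I compute
\[
Z=\E_{P_0}\!\left[\frac{P_T}{P_0}\right]=\sum_{y:\,P_0(y)>0}P_T(y)=1.
\]
This uses the absolute-continuity assumption \(P_T\ll P_0\) from the setup, which guarantees that all of \(P_T\)'s mass lies where \(P_0>0\). This is the one point needing care: without it, \(Z\) could be less than one and \(Q^*\) would be \(P_T\) restricted to the support of \(P_0\), renormalized.

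For item~3, set \(R_\task=0\) and \(\alpha>\beta\). The form \(Q^*\propto P_T^{\alpha/\beta}P_0^{1-\alpha/\beta}\) follows directly. Here the exponent \(1-\alpha/\beta\) is negative, so \(Z\) need not be finite in general, and I will explicitly rely on the standing finite-partition-function assumption. The interpretation as extrapolation comes from rewriting
\[
\log Q^* = \log P_0 + \frac{\alpha}{\beta}\log\frac{P_T}{P_0} - \log Z.
\]
Along the family indexed by \(\lambda=\alpha/\beta\), the value \(\lambda=1\) gives \(P_T\), and \(\lambda>1\) moves past \(P_T\) in the direction of the log-ratio.

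Item~4 is immediate from the same factorization. Both \(\exp(R_\task/\beta)\) and \((P_T/P_0)^{\alpha/\beta}\) multiply the same base measure \(P_0\) under a single normalizer \(Z\), so both tilt one common target.

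I expect no real obstacle. The only delicate points are the normalization in item~2, which depends on absolute continuity, and the finiteness of \(Z\) in item~3, where the paper's standing assumptions must be invoked explicitly.
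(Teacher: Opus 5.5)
Your proposal is correct and takes the same route as the paper. Each item follows by substituting the special parameter values into the closed form of Proposition~\ref{prop:gibbs}; your explicit checks that \(Z=\E_{P_0}[P_T/P_0]=1\) under \(P_T\ll P_0\) in item~2, and that item~3 relies on the standing finite-partition-function assumption, make precise two points the paper leaves implicit.
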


\paragraph{Relationship to the implementation.}
The theorem is an organizing target, not a claim that every implementation
detail exactly optimizes Eq.~\eqref{eq:appendix-control-objective}.  First,
the identity in Lemma~\ref{lem:path-ratio} uses a token sum, whereas the
algorithm uses a masked mean to control reward--length correlation.  For
variable-length responses this changes the utility.  Second, the
iteration-wise anchor is \(P_\old\), held fixed while a batch is scored and
updated.  Third, PPO clipping is the practical proximal mechanism; an
optional long-horizon reference KL is an additional actor regularizer and is
not part of the teacher reward.

\section{Exact Decomposition of the PUU Advantage}
\label{app:decomposition}

Within one prompt group, let
\[
R_i^U=R_i^\task+\alpha R_i^T,\quad
\mu_U=G^{-1}\sum_iR_i^U,\quad
\sigma_U=\operatorname{Std}_{i=1}^{G}(R_i^U).
\]
Define
\[
A_i^{\task\mid U}=\frac{R_i^\task-\mu_\task}{\sigma_U+\epsilon},\qquad
A_i^{T\mid U}=\frac{R_i^T-\mu_T}{\sigma_U+\epsilon},\qquad
A_i^U=\frac{R_i^U-\mu_U}{\sigma_U+\epsilon}.
\]

\begin{proposition}[Advantage decomposition]
\label{prop:adv-decomposition}
Using the shared PUU denominator,
\[
A_i^U=A_i^{\task\mid U}+\alpha A_i^{T\mid U}.
\]
\end{proposition}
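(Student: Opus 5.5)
The plan is to check linearity of the group mean and then divide through by the common denominator; no earlier result is needed beyond the definitions in Eq.~\eqref{eq:unified-reward} and Eq.~\eqref{eq:unified-advantage}. First, fix notation. Here \(\mu_\task=G^{-1}\sum_j R_j^\task\) and \(\mu_T=G^{-1}\sum_j R_j^T\) are the plain group means of the two reward components. They are not normalized by their own standard deviations. Substituting \(R_j^U=R_j^\task+\alpha R_j^T\) into \(\mu_U\) and using linearity of the finite average gives
\[
\mu_U=\frac1G\sum_{j=1}^{G}\left(R_j^\task+\alpha R_j^T\right)=\mu_\task+\alpha\mu_T .
\]

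Second, subtract this from \(R_i^U\) to get the centered identity
\[
R_i^U-\mu_U=\left(R_i^\task-\mu_\task\right)+\alpha\left(R_i^T-\mu_T\right).
\]
Third, divide both sides by the single scalar \(\sigma_U+\epsilon\). This is legitimate because \(\sigma_U\ge 0\) and \(\epsilon>0\), so the denominator is strictly positive. That holds even for verifier-degenerate groups where \(\sigma_U\) may vanish. The left side is \(A_i^U\) by definition. Distributing the division over the sum gives exactly \(A_i^{\task\mid U}+\alpha A_i^{T\mid U}\), because both component advantages are defined with the same denominator \(\sigma_U+\epsilon\).

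There is no real obstacle. The only point needing care is the denominator. The identity uses the shared \(\sigma_U\), not component-wise standard deviations. I will remark that \(\sigma_U\) is not decomposed at all: it is a nonlinear function of the joint utilities and enters both sides identically. Replacing it with separate \(\operatorname{Std}(R^\task)\) and \(\operatorname{Std}(R^T)\) would break the equality. This matches the paper's motivation for the shared scale.
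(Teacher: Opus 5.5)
Your proof is correct and matches the paper's argument: linearity of the group mean gives $\mu_U=\mu_{\mathrm{task}}+\alpha\mu_T$, so the centered unified reward splits additively, and dividing by the shared denominator $\sigma_U+\epsilon$ yields the identity. Your remark that $\sigma_U+\epsilon>0$, so the division is well defined even in verifier-degenerate groups, is a harmless detail that the paper leaves implicit.
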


\begin{proof}
Linearity gives \(\mu_U=\mu_\task+\alpha\mu_T\), hence
\[
R_i^U-\mu_U=(R_i^\task-\mu_\task)
+\alpha(R_i^T-\mu_T).
\]
Division by the common denominator proves the identity.
\end{proof}

If the two rewards were separately standardized by different standard
deviations, this identity would fail and the effective teacher coefficient
would vary with each group's empirical scales.

\section{Constrained Token-Credit Projection}
\label{app:projection}

For one non-empty response, let the raw local proposal be
\(\frac12c_{i,t}d_{i,t}\).  We seek its closest zero-sum adjustment:
\begin{equation}
\min_{\{q_{i,t}\}}
\sum_{t:m_{i,t}=1}
\frac{\left(q_{i,t}-\frac12c_{i,t}d_{i,t}\right)^2}{c_{i,t}}
\quad\text{subject to}\quad
\sum_{t:m_{i,t}=1}q_{i,t}=0.
\label{eq:appendix-projection}
\end{equation}
Because \(c_{i,t}=\exp(-H^T_{i,t}/\tau_H)>0\) for finite entropy, the
objective is strictly convex.

\begin{proposition}[Closed-form projection]
\label{prop:projection}
The unique solution is
\[
q_{i,t}=\frac12c_{i,t}(d_{i,t}-\mu_i^c),\qquad
\mu_i^c=\frac{\sum_t m_{i,t}c_{i,t}d_{i,t}}
{\sum_t m_{i,t}c_{i,t}}.
\]
\end{proposition}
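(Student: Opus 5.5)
The plan is to solve the equality-constrained, strictly convex quadratic program in Eq.~\eqref{eq:appendix-projection} with a single Lagrange multiplier. Existence and uniqueness come first and need little work. The feasible set \(\{q:\sum_{t:m_{i,t}=1}q_{i,t}=0\}\) is a nonempty affine subspace. The objective is a positive-definite diagonal quadratic, with weights \(1/c_{i,t}>0\) since \(c_{i,t}>0\) for finite entropy, so it is strictly convex and coercive. Hence a unique minimizer exists. Because the constraint is affine, the KKT (stationarity) conditions are both necessary and sufficient, so exhibiting one stationary feasible point identifies the minimizer.

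Next I will form the Lagrangian
\[
\mathcal{L}(q,\lambda)=\sum_{t:m_{i,t}=1}\frac{\bigl(q_{i,t}-\tfrac12c_{i,t}d_{i,t}\bigr)^2}{c_{i,t}}+\lambda\sum_{t:m_{i,t}=1}q_{i,t}.
\]
Setting \(\partial\mathcal{L}/\partial q_{i,t}=0\) gives \(2(q_{i,t}-\tfrac12c_{i,t}d_{i,t})/c_{i,t}+\lambda=0\), that is,
\[
q_{i,t}=\tfrac12c_{i,t}d_{i,t}-\tfrac12\lambda c_{i,t}=\tfrac12c_{i,t}(d_{i,t}-\lambda).
\]
The multiplier is fixed by the constraint. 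Summing over valid tokens, \(\sum_t m_{i,t}c_{i,t}(d_{i,t}-\lambda)=0\). This gives \(\lambda=\sum_t m_{i,t}c_{i,t}d_{i,t}/\sum_t m_{i,t}c_{i,t}=\mu_i^c\). The denominator is strictly positive because the response is non-empty and every \(c_{i,t}>0\). Substituting \(\lambda\) back yields the stated formula.

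Finally, I will check that this formula matches Eq.~\eqref{eq:projection-weights} on valid tokens. The factor \(m_{i,t}\) there only zeroes out invalid positions, and those positions are not decision variables in the projection.

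There is no real obstacle here. The only subtle point is why the \(1/c_{i,t}\) weighting produces a confidence-weighted mean rather than a plain mean. This falls out of the stationarity equation, where the shift applied to each token is proportional to \(c_{i,t}\). As an alternative, one could avoid multipliers by writing \(q=\tfrac12 c\odot(d-\lambda\mathbf 1)\) and verifying optimality directly: the objective difference for any feasible perturbation \(h\) is \(\sum h_t^2/c_t\ge 0\), because the cross term equals \(-\lambda\sum h_t=0\).
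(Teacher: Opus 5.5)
Your proposal is correct and follows the paper's proof essentially verbatim. Both use a single multiplier for the zero-sum constraint, derive \(q_{i,t}=\tfrac12c_{i,t}(d_{i,t}-\lambda)\) from stationarity, let the constraint fix \(\lambda=\mu_i^c\), and get uniqueness from strict convexity; your extras (existence via coercivity, a positive denominator for a non-empty response, and the direct perturbation check \(\sum_t h_t^2/c_{i,t}\ge 0\)) only make explicit details the paper leaves implicit.
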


\begin{proof}
Introduce a multiplier \(\lambda\) for the zero-sum constraint.  Stationarity
at a valid token gives
\[
\frac{2}{c_{i,t}}
\left(q_{i,t}-\tfrac12c_{i,t}d_{i,t}\right)+\lambda=0,
\]
so \(q_{i,t}=\frac12c_{i,t}(d_{i,t}-\lambda)\).  The constraint yields
\(\lambda=\mu_i^c\).  Strict convexity gives uniqueness.
\end{proof}

\begin{proposition}[Additive task-budget and sign preservation]
\label{prop:invariants}
For every non-empty response and \(0\le\rho<1\),
\[
\frac1{L_i}\sum_{t:m_{i,t}=1}
A_i^{\task\mid U}(1+\rho q_{i,t})
=A_i^{\task\mid U},
\]
and every nonzero token-wise task component retains the sign of
\(A_i^{\task\mid U}\).
\end{proposition}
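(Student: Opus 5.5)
Two things need to be shown: the mean of the weights $1+\rho q_{i,t}$ over valid tokens equals one, and each weight is strictly positive. I take $L_i=\sum_t m_{i,t}$ to be the number of valid tokens, which is positive because the response is non-empty. I also use $c_{i,t}>0$, which holds for finite entropy as noted before Proposition~\ref{prop:projection}.

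\textbf{Budget identity.} Pull the constant $A_i^{\task\mid U}$ out of the sum. This leaves the claim $\sum_{t:m_{i,t}=1}(1+\rho q_{i,t})=L_i$, i.e. $\sum_{t:m_{i,t}=1}q_{i,t}=0$. That is exactly the constraint of Eq.~\eqref{eq:appendix-projection}, which the closed-form solution of Proposition~\ref{prop:projection} satisfies. For self-containment I will also verify it directly:
\[
\sum_t m_{i,t}c_{i,t}(d_{i,t}-\mu_i^c)=\sum_t m_{i,t}c_{i,t}d_{i,t}-\mu_i^c\sum_t m_{i,t}c_{i,t}=0,
\]
which follows from the definition of $\mu_i^c$; the denominator is positive since $c_{i,t}>0$ and $L_i\ge1$. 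Dividing by $L_i$ then gives the stated identity. If $A_i^{\task\mid U}=0$, both sides are trivially zero.

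\textbf{Sign preservation.} The token-wise task component is $A_i^{\task\mid U}w_{i,t}$, so it suffices to show $w_{i,t}>0$, and for this I bound $|q_{i,t}|\le1$.
\begin{itemize}
\item Since $c_{i,t}=\exp(-H^T_{i,t}/\tau_H)$ with $H^T_{i,t}\ge0$, we have $0<c_{i,t}\le1$.
\item Since $d_{i,t}=\tanh(\cdot)$, we have $|d_{i,t}|<1$.
\item $\mu_i^c$ is a convex combination of the $d_{i,t}$ with weights $m_{i,t}c_{i,t}/\sum_s m_{i,s}c_{i,s}$, so $|\mu_i^c|<1$.
\end{itemize}
Hence $|d_{i,t}-\mu_i^c|<2$ and $|q_{i,t}|=\tfrac12 c_{i,t}|d_{i,t}-\mu_i^c|<1$. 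With $0\le\rho<1$ this gives $w_{i,t}=1+\rho q_{i,t}>1-\rho>0$. Multiplying by $A_i^{\task\mid U}$ therefore preserves its sign, and the product is nonzero exactly when $A_i^{\task\mid U}\neq0$. Masked tokens have $q_{i,t}=0$ and $w_{i,t}=1$, so they are unaffected.

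\textbf{Main obstacle.} The delicate step is the bound $|q_{i,t}|\le1$. It depends on nonnegativity of the entropy (so that $c_{i,t}\le1$) and on $\mu_i^c$ lying in the convex hull of the $d_{i,t}$. The factor $\tfrac12$ is what offsets the worst-case spread of $2$. I will state these facts explicitly rather than rely on the informal remark in Section~\ref{sec:ecr}.
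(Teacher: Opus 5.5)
Your proposal is correct and takes essentially the same approach as the paper's proof. You get the zero-sum property of $q_{i,t}$ from the definition of $\mu_i^c$, then $|q_{i,t}|\le 1$ from $c_{i,t}\in(0,1]$, $|d_{i,t}|\le 1$ and $\mu_i^c$ being a confidence-weighted average, which yields positive weights with unit mean. One cosmetic slip: at $\rho=0$ you have $w_{i,t}=1=1-\rho$, so write $w_{i,t}\ge 1-\rho>0$ rather than $w_{i,t}>1-\rho$; the conclusion is unaffected.
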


\begin{proof}
The definition of \(\mu_i^c\) implies
\(\sum_{t:m_{i,t}=1}q_{i,t}=0\).  Since \(d_{i,t}\in[-1,1]\), its
confidence-weighted mean \(\mu_i^c\in[-1,1]\).  Together with
\(c_{i,t}\in(0,1]\), this gives \(|q_{i,t}|\le1\), hence
\(1-\rho\le1+\rho q_{i,t}\le1+\rho\).  The weight is positive and has
arithmetic mean one, proving both claims.
\end{proof}

The proposition concerns only the verifier-derived component before clipping.
It does not fix the full PUU advantage sign or preserve the gradient norm
and parameter update after clipping.

\subsection{Entropy computation and limiting cases}

For raw teacher logits \(z^T_{i,t}\), entropy is evaluated in float32 as
\[
H^T_{i,t}
=\log\sum_v\exp z^T_{i,t,v}
-\sum_v p^T_{i,t,v}z^T_{i,t,v},
\qquad p^T_{i,t}=\softmax(z^T_{i,t}).
\]
As \(H^T_{i,t}\) grows, \(c_{i,t}\) decreases and the local proposal is
attenuated.  If \(\rho=0\), confidence vanishes at every valid token, or all
directions within a response are identical, the redistribution term vanishes
and every token receives the original PUU advantage.

\section{Direct Comparison with Closest Methods}
\label{app:closest-methods}

\subsection{Distilled RL}

For a response-level GRPO advantage \(A_i\), Distilled RL defines
\[
\rho^T_{i,t}=\frac{\pi_T(y_{i,t}\mid s_{i,t})}
{\pi_\old(y_{i,t}\mid s_{i,t})},\qquad
\bar\rho^T_{i,t}=\clip(\rho^T_{i,t},\epsilon_T^{-1},\epsilon_T),
\]
\[
\widetilde\rho^T_{i,t}
=\frac{\bar\rho^T_{i,t}}
{\exp\left(L_i^{-1}\sum_s\log\bar\rho^T_{i,s}\right)},\qquad
w^{\mathrm{DRL}}_{i,t}=
\begin{cases}
\widetilde\rho^T_{i,t},&A_i>0,\\
1,&A_i\le0.
\end{cases}
\]
It then inserts \(w^{\mathrm{DRL}}_{i,t}A_i\) into the clipped policy
surrogate \citep{wang2026distilledrl}.  The teacher ratio is thus applied
after verifier group normalization and cannot alter response ranking.  Its
geometric normalization guarantees
\((\prod_t\widetilde\rho^T_{i,t})^{1/L_i}=1\), not
\(L_i^{-1}\sum_t\widetilde\rho^T_{i,t}=1\).  Our no-projection audit measures
the practical difference between these invariants under token-mean reduction.

\subsection{ATOD}

ATOD constructs
\[
A_{i,t}^{\mathrm{ATOD}}
=\kappa(s)\,w_{k(t)}\Delta\log p_{i,t}
+\rho(s)A_i^{\mathrm{GRPO}},
\]
where \(\kappa(s)\) decreases, \(\rho(s)\) increases, and \(w_{k(t)}\) is the
T-DUR weight for the containing interaction turn
\citep{tan2026atod}.  T-DUR combines normalized turn-level
teacher--student disagreement with student sampled-token surprisal and weights
only the OPD term.  The sum is optimized through one clipped actor surrogate.
ATOD is therefore not an independent-loss method.  It differs from PUU because
the GRPO advantage is normalized before combination and the teacher term is a
token-level OPD advantage rather than part of the response utility used for
group ranking.

\begin{table*}[t]
\centering
\footnotesize
\setlength{\tabcolsep}{3pt}
\renewcommand{\arraystretch}{1.15}
\caption{Conceptual comparison with the closest baselines.  ``Exact additive
budget'' refers to the arithmetic token mean of the verifier-derived
component before PPO clipping.}
\label{tab:method-comparison}
\begin{tabular}{@{}>{\raggedright\arraybackslash}p{1.02in}*{2}{>{\raggedright\arraybackslash}p{0.63in}}>{\raggedright\arraybackslash}p{1.06in}>{\raggedright\arraybackslash}p{0.65in}>{\raggedright\arraybackslash}p{0.72in}@{}}
\toprule
Method & Teacher in ranking & One clipped update
& Negative task credit & Entropy attenuation
& Exact credit budget\\
\midrule
Naive GRPO+OPD & No & No & OPD independent & No & No\\
ATOD & No & Yes & Through additive OPD & No & No\\
Distilled RL & No & Yes & Reset to GRPO & No & No\\
PUU & Yes & Yes & Yes, at trajectory level & No & Broadcast\\
\ours{} & Yes & Yes & Yes, signed protection & Yes & Yes\\
\bottomrule
\end{tabular}
\end{table*}

\section{Additional Experimental Protocol}
\label{app:experiments}

\subsection{Signal diagnosis across training}
\label{app:signal-diagnosis}

For each checkpoint and response, the verifier returns
\(R_i^\task\in\{0,1\}\), while the teacher score is
\[
R_i^T=\frac{\sum_t m_{i,t}
\left(\log\pi_{T,T=1}(y_{i,t}\mid s_{i,t})
-\log\pi_{\old,T=1}(y_{i,t}\mid s_{i,t})\right)}
{\sum_t m_{i,t}}.
\]
A verifier-degenerate group contains identical binary rewards and hence zero
within-group task advantage.  We report (i) the fraction of such groups,
(ii) the fraction in which \(R_i^T\) has nonzero variation, (iii) the
teacher--verifier ranking-conflict rate in non-degenerate groups, and (iv) a
blinded model audit on verifier-tied pairs.  The initialization tie rate is 41.3\%, as shown in
Figure~\ref{fig:signal-complementarity}.

\subsection{Token-credit diagnosis at initialization}
\label{app:token-credit-diagnosis}

The mechanism audit uses the shared initial 1.7B student and frozen 4B teacher
before training.  Eight responses are sampled for each of 300 prompts, giving
2,400 trajectories and 21.55M valid response tokens.  No optimizer update or
test benchmark is involved.  For each valid token,
\[
\bar d_{i,t}=\tanh\!\left(\frac{\delta_{i,t}}{2\tau_\delta}\right),
\quad
d_{i,t}=\operatorname{sign}(A_i^{\task\mid U})\bar d_{i,t},
\quad
c_{i,t}=\exp(-H^T_{i,t}/\tau_H).
\]
Entropy strata are determined once from global step-0 quantiles.  Tied
near-deterministic values may collapse repeated boundaries, so these are
called strata rather than strict deciles.  We report token-weighted and
response-balanced summaries with prompt-cluster bootstrap.

Teacher concentration is measured by realized-token probability and
realized-token/top-1 match, neither of which is called token correctness.
Large-gap concentration is reported at top 5\%, 10\%, and 20\% thresholds.
The primary audit finds that 82.4\% of top-10\% gaps occur in the highest
entropy quartile and that calibration reduces their mean magnitude by 37\%.

For projected and unprojected weights, define
\[
B_i=\frac{\sum_t m_{i,t}w_{i,t}}{\sum_t m_{i,t}}.
\]
The projection gives \(B_i=1\) analytically and a maximum observed numerical
error of \(2.22\times10^{-16}\).  The unprojected variant ranges from 0.98
to 1.02.  These quantities measure task-credit mass before clipping; they do not
measure the resulting gradient norm or parameter update.

\subsection{Offline PUU group-signal audit}
\label{app:puu-signal-audit}

\paragraph{Rollouts and scoring.}
We evaluate all 40 AMC 2023 problems using Qwen3-1.7B checkpoints from PUU
training. Each checkpoint samples 12 responses per problem with temperature
0.6, top-\(p=0.95\), top-\(k=20\), and seed 42. The verifier assigns
\(R_i^\task\in\{0,1\}\) by exact integer match with the reference answer. The
frozen Qwen3-4B-GRPO teacher and Qwen3-1.7B-Base anchor, denoted by
\(\pi_T\) and \(\pi_B\), score the same realized response tokens at raw
temperature 1. The offline teacher score is
\begin{equation}
R_i^T=\frac{1}{L_i}\sum_{t=1}^{L_i}
\clip\!\left(
\log\pi_T(y_{i,t}\mid s_{i,t})-
\log\pi_B(y_{i,t}\mid s_{i,t}),-5,5
\right),
\qquad R_i^U=R_i^\task+R_i^T,
\label{eq:app-puu-audit-reward}
\end{equation}
which corresponds to \(\alpha=1\) in this audit. For each problem, a fixed
seed partitions the 12 responses into a group of eight and a retained group
of four, producing 80 pseudo-groups per checkpoint. These groups are used only
for the diagnostic and do not alter training.

\paragraph{Group-level signal coverage.}
Panel (a) of Figure~\ref{fig:puu-group-signal} uses checkpoints 30, 110, 190,
270, 350, 430, and 515. A group is verifier-degenerate when all its task
rewards are identical. We report its all-wrong and all-correct portions, the
fraction of all groups with \(\operatorname{Std}(R^U)>0.1\), and the fraction
with \(\max_iR_i^T-\min_iR_i^T>0.05\). The latter two statistics are computed
over all groups rather than conditioned on verifier degeneracy. Across the
selected checkpoints, verifier-degenerate groups account for 40--53\% of the
total, while nontrivial unified-utility variation appears in 56--90\%. The
teacher-range statistic decreases from 86\% at step 30 to 21\% at step 515,
showing that its additional response-level variation is strongest early in
training.

\paragraph{Mixed-group advantage composition.}
Panel (b) uses checkpoints 30, 60, 90, 120, 180, 210, and 230 and retains only
groups containing both verifier outcomes. We compare the verifier-only GRPO
advantage with the exact PUU decomposition
\[
A_i^U=A_i^{\task\mid U}+\alpha A_i^{T\mid U},
\]
where the two components share the unified denominator. Each plotted value is
the response-mean absolute magnitude at that checkpoint. We additionally
compute
\[
\operatorname{TeacherShare}=
\frac{\sum_i|\alpha A_i^{T\mid U}|}
{\sum_i|A_i^{\task\mid U}|+\sum_i|\alpha A_i^{T\mid U}|}.
\]
The teacher share ranges from 17\% to 25\%, and \(|A^U|\) remains close in
mean magnitude to \(|A^{\mathrm{GRPO}}|\). Thus the teacher term is visible
without overwhelming the task component under this construction.

This is a signal-geometry audit rather than a training ablation. It uses
pseudo-groups and a fixed base-model anchor, whereas the practical on-policy
update uses the behavior policy as its current anchor. The plotted component
magnitudes therefore do not estimate parameter-gradient or accuracy
contributions; the controlled training variants are reported in
Table~\ref{tab:component-ablation}.

\subsection{Offline sensitivity of verifier-consistent ordering to \(\alpha\)}
\label{app:alpha-sweep}

\paragraph{Scope and fixed rollouts.}
We perform a fixed-rollout sensitivity audit of the PUU teacher coefficient
\(\alpha\); this is not a training-performance ablation. The audit uses 17
Qwen3-1.7B checkpoints from ATOD training (steps 30, 60, 90, through 480, and
515) with the frozen Qwen3-4B-GRPO teacher. At each checkpoint, we reuse
previously generated responses on the held-out AMC 2023 benchmark: 24 samples
per problem through step 240 and 12 samples thereafter, for 40 problems per
checkpoint. The resulting data contain 451 mixed verifier groups and 22,811
correct--incorrect response pairs. These are benchmark groups rather than
training-prompt groups, and their sizes differ from the training group size
\(G=8\).

\paragraph{Checkpoint-relative teacher score.}
For each response, the teacher and the student checkpoint that generated that
response are evaluated at raw temperature 1. We compute
\begin{equation}
R_i^T=\frac{1}{L_i}\sum_{t=1}^{L_i}
\clip\!\left(
\log\pi_T(y_{i,t}\mid s_{i,t})-
\log\pi_{\old,i}(y_{i,t}\mid s_{i,t}),-5,5
\right),
\label{eq:app-alpha-audit-teacher-score}
\end{equation}
where \(\pi_{\old,i}\) is the generating checkpoint, not a fixed reference.
The masked token mean, per-token clipping, and exclusion of prompt and padding
tokens match the training reward configuration. Log probabilities are obtained
once with vLLM prompt scoring and cached before the CPU-only \(\alpha\) sweep.
Scoring uses a maximum model length of 8,192 tokens. Responses beyond this
limit, accounting for approximately 10--24\% depending on the checkpoint, are
head-truncated; their \(R_i^T\) therefore uses only the available response
tokens. Truncation and non-finite-token counts are recorded rather than
silently discarded.

\paragraph{Ordering metrics.}
For a mixed group \(g\), let \(\mathcal C_g\) and \(\mathcal W_g\) denote its
correct and incorrect responses. At coefficient \(\alpha\),
\begin{equation}
U_c(\alpha)=1+\alpha R_c^T,\qquad
U_w(\alpha)=\alpha R_w^T.
\label{eq:app-alpha-audit-utility}
\end{equation}
Group normalization is order preserving, so comparing these utilities is
equivalent to comparing their PUU advantages. With tie tolerance \(10^{-6}\),
the primary conflict statistic first averages strict inversions within each
group and then gives every mixed group equal weight:
\begin{equation}
C_g(\alpha)=\frac{1}{|\mathcal C_g||\mathcal W_g|}
\sum_{c\in\mathcal C_g}\sum_{w\in\mathcal W_g}
\mathbf 1\!\left[U_w(\alpha)>U_c(\alpha)+10^{-6}\right].
\label{eq:app-alpha-audit-conflict}
\end{equation}
We also report a pair-micro average as a secondary statistic. A group is fully
safe at \(\alpha\) when
\(M_g(\alpha)=\min_{c\in\mathcal C_g}U_c(\alpha)
-\max_{w\in\mathcal W_g}U_w(\alpha)>10^{-6}\). Its first analytical tie point
is
\begin{equation}
\alpha_g^*=\left(
\max_{w\in\mathcal W_g}R_w^T-
\min_{c\in\mathcal C_g}R_c^T
\right)^{-1}
\label{eq:app-alpha-audit-threshold}
\end{equation}
when the parenthesized difference is positive, and \(+\infty\) otherwise.

We separately measure whether the sign of \(A_i^U(\alpha)\) differs from that
of the centered task reward, using population standard deviation and
\(\epsilon=10^{-6}\). This trajectory-sign statistic is not an ordering
statistic: a correct response can remain above every incorrect response yet
fall below the unified group mean. All-correct and all-wrong groups are
excluded from cross-label inversion metrics; for them we report only whether
the teacher score supplies nontrivial within-group variation.

\paragraph{Sweep and uncertainty.}
The pre-specified sweep combines explicit operating points with a dense grid
over \([0,8]\). A second grid places ten points below the first coefficient at
which at least 10\% of mixed groups at a checkpoint contain an inversion. We
use 2,000 bootstrap replicates with seed 42, clustered by AMC problem, for 95\%
intervals. The analytical thresholds are checked against the grid transitions,
and \(\alpha=0\) recovers unit correct--incorrect margins and zero conflicts.
For a zero-event estimate at \(\alpha=1\), we additionally compute the
corresponding 95\% rule-of-three upper bound from the eligible sample count.

\begin{figure*}[t]
\centering
\includegraphics[width=\textwidth]{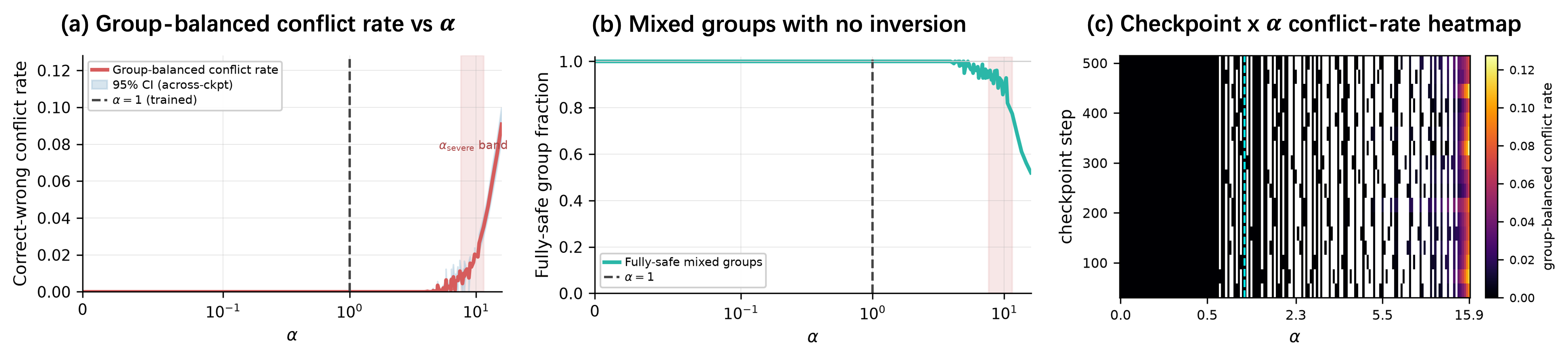}
\caption{Fixed-rollout sensitivity of verifier-consistent ordering to
\(\alpha\) on AMC 2023. The audit covers 17 ATOD-trained Qwen3-1.7B
checkpoints, 451 mixed verifier groups, and 22,811 correct--incorrect pairs.
(a) Group-balanced strict conflict rate, averaged across checkpoints, with a
95\% problem-cluster bootstrap interval. (b) Fraction of mixed groups in which
every correct response remains above every incorrect response. (c)
Checkpoint-wise group-balanced conflict rate. Dashed lines mark the trained
setting \(\alpha=1\); the shaded band spans the checkpoint-wise coefficients
\(7.6\)--\(11.5\) at which at least 10\% of mixed groups first contain an
inversion. This is an offline ordering audit, not a retraining or performance
comparison.}
\label{fig:alpha-sweep}
\end{figure*}

\paragraph{Results and limitations.}
At \(\alpha=1\), we observe no strict correct--incorrect ordering inversion at
any of the 17 checkpoints: both the group-balanced conflict rate and the
any-inversion group rate are zero over the 451 eligible mixed groups and 22,811
pairs. The distinct trajectory-sign statistic ranges from 0 to 3.1\% across
checkpoints. The most restrictive observed group has
\(\alpha_g^*\approx3.88\); the checkpoint-wise 5th percentile of
\(\alpha_g^*\), corresponding to 95\% empirically safe groups, ranges from 5.3
to 9.3, while the median ranges from 12.6 to 21.3. Between 0 and 7.7\% of groups
at a checkpoint have \(\alpha_g^*=+\infty\). Thus \(\alpha=1\) empirically
preserves verifier-consistent correct--incorrect ordering on these observed
rollouts, but this does not constitute a universal guarantee.

The audit is limited to held-out AMC 2023 rollouts rather than the training
prompt distribution; it uses benchmark sample groups of size 12 or 24 rather
than training groups of eight; long responses are truncated at 8,192 tokens;
and \(\pi_{\old,i}\) is the checkpoint that generated each response rather
than the transient old policy inside a training iteration. Finally, the audit
tests cross-label ordering only. It does not establish that teacher-induced
ordering within the correct or incorrect class reflects process quality, nor
does it estimate the performance that retraining at another \(\alpha\) would
produce.

\subsection{Offline ECR credit-localization audit}
\label{app:ecr-localization}

\paragraph{Data and selection.}
This audit uses the shared initialization, with Qwen3-1.7B-Base as both the
student and old policy and the frozen Qwen3-4B-GRPO model as teacher. Prompts
are drawn from the difficulty-5--7 portion of DeepMath-103K after shuffling
with seed 42. The student samples eight responses per prompt at temperature
0.7, top-\(p=1\), and a 16,384-token response limit. A rule-based verifier
compares the extracted \(\boxed{\cdot}\) answer with the reference answer.
Teacher and old-policy log probabilities are evaluated at raw temperature 1,
and teacher entropy uses the full softmax distribution in float32.

We retain real eight-response groups containing both rewards, select only
incorrect responses with \(A_i^{\task\mid U}<0\), and remove empty or
reasoning-free outputs. Selection does not use teacher scores, entropy, ECR
weights, or anticipated labels. This produces 200 candidate responses for
blinded annotation. A deterministic parser divides each response at line, enumeration, and
sentence boundaries. Opus 4.8 sees only the problem, reference answer, and
numbered reasoning steps, and labels each step as valid, invalid, uncertain,
or not reasoning. It also marks at most one first substantive error. Method
signals and model identities remain hidden. Uncertain and non-reasoning steps
are excluded from the primary statistic.

\paragraph{Counterfactual weights.}
For every retained token, we reproduce the training implementation in
Eq.~\eqref{eq:projection-weights}. Four weights are computed on the same
responses: PUU sets \(w_{i,t}=1\); the no-entropy variant sets \(c_{i,t}=1\)
and recomputes the projection; the shuffled control independently permutes
\(H^T_{i,t}\) across valid tokens within each response 100 times, preserving
its multiset while breaking positional alignment; and Full ECR uses the
original entropy and complete zero-sum projection. Each shuffle recomputes
\(c_{i,t}\), \(\mu_i^c\), \(q_{i,t}\), and \(w_{i,t}\).

For response \(i\), the primary localization statistic is
\begin{equation}
D_i=
\underset{t\in\mathcal I_i}{\operatorname{mean}}\,w_{i,t}
-
\underset{t\in\mathcal V_i}{\operatorname{mean}}\,w_{i,t},
\label{eq:ecr-localization-gap}
\end{equation}
where \(\mathcal I_i\) and \(\mathcal V_i\) contain tokens in judge-invalid
and judge-valid steps. Because the analyzed task advantage is negative,
\(D_i>0\) means that invalid steps receive stronger negative task credit than
valid steps. We macro-average \(D_i\) over responses and obtain 95\%
confidence intervals by resampling prompts 10,000 times with seed 42. The
reported \(\Pr(D_i>0)\) is the fraction of eligible responses with a positive
per-response gap, rather than a token-level significance test. A secondary
statistic compares the first erroneous step with valid steps preceding it.
All inference remains clustered by prompt.

\paragraph{Results and scope.}
Full ECR gives \(D=0.0048\) and lies to the right of all 100 entropy-position
permutations, whose mean gap is 0.0024. PUU is exactly zero because it
broadcasts one task weight across the response. The positive Full-ECR gap
shows that its redistribution agrees on average with an independent audit of
step quality, while the permutation result indicates that entropy contributes
positional information beyond its marginal distribution. The projection
preserves a unit response-wise mean weight and hence reallocates rather than
increases the verifier-derived budget. This audit tests semantic alignment of
the realized weights; it neither estimates gradient contribution nor proves a
causal improvement in downstream accuracy.

\subsection{Aligned configurations}

The Qwen3-1.7B student is trained for 515 steps on DeepMath-103K
problems with difficulty levels 5--7 with the Qwen3-4B-GRPO teacher. It uses rollout
temperature 0.7, top-\(p=0.95\),
top-\(k=20\), a 2,048-token prompt limit, a 16,384-token response limit,
learning rate \(10^{-6}\), one PPO epoch, and symmetric clipping 0.2.
Reference KL uses actor-loss \texttt{low\_var\_kl} with coefficient 0.001;
reward KL is disabled.  Realized-token log probabilities and full-vocabulary entropy are computed
from the same teacher logits at raw \(T=1\) in a single chunked scoring pass.

The Qwen3-4B student uses a frozen Qwen3-8B-Math-GRPO teacher and a sampled
subset of DeepMath-103K problems with difficulty levels 6--8. The global prompt batch size is 126, with \(G=8\). Both models
use non-thinking mode in training and evaluation.

\paragraph{Data hygiene and checkpoint selection.}
The draft protocol screens training manifests against AIME 2024, AIME 2025,
and AMC 2023 using normalized exact match, n-gram overlap, and embedding
retrieval followed by manual inspection. The screening results and actual
checkpoint-selection rule still require confirmation from the completed runs.

\begin{figure*}[t]
\centering
\includegraphics[width=\textwidth]{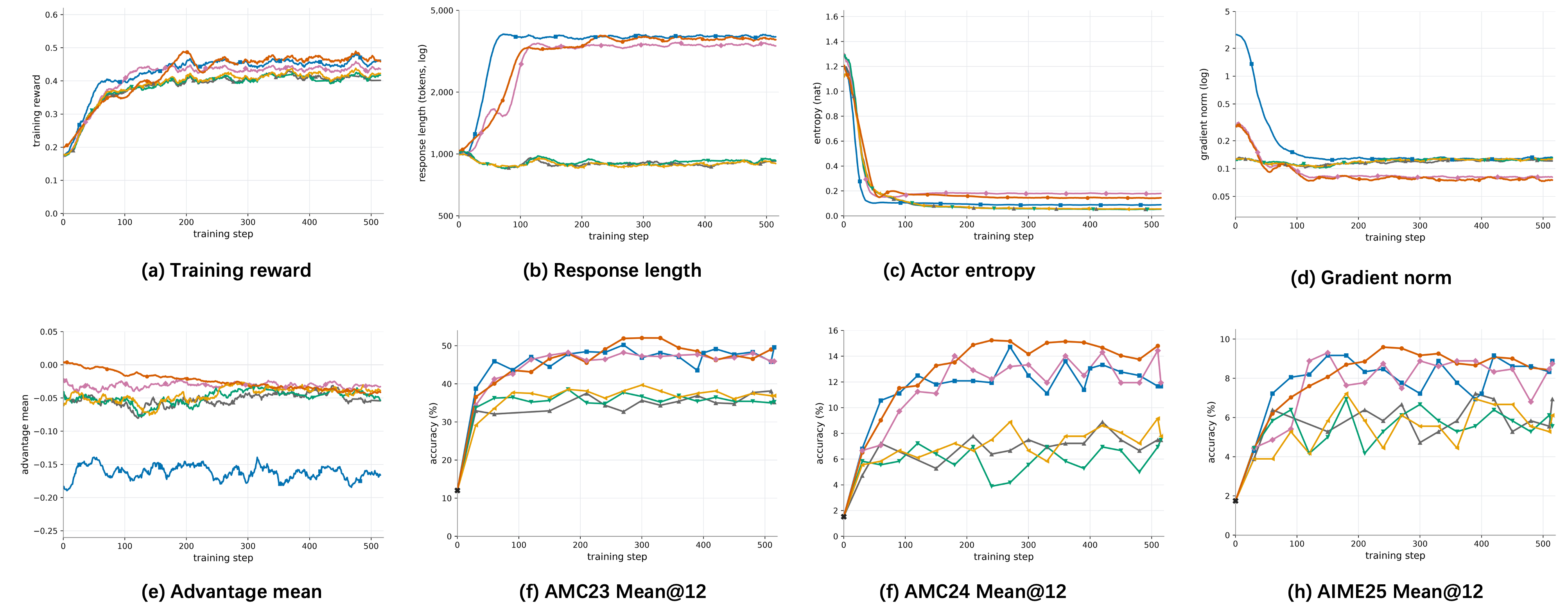}
\caption{Training dynamics for the aligned runs with the Qwen3-1.7B student
and Qwen3-4B-GRPO teacher, including
training reward, response length, actor entropy, gradient norm, advantage
statistics, and fixed-decoding validation accuracy.}
\label{fig:qwen17b-training}
\end{figure*}

\begin{figure*}[t]
\centering
\includegraphics[width=\textwidth]{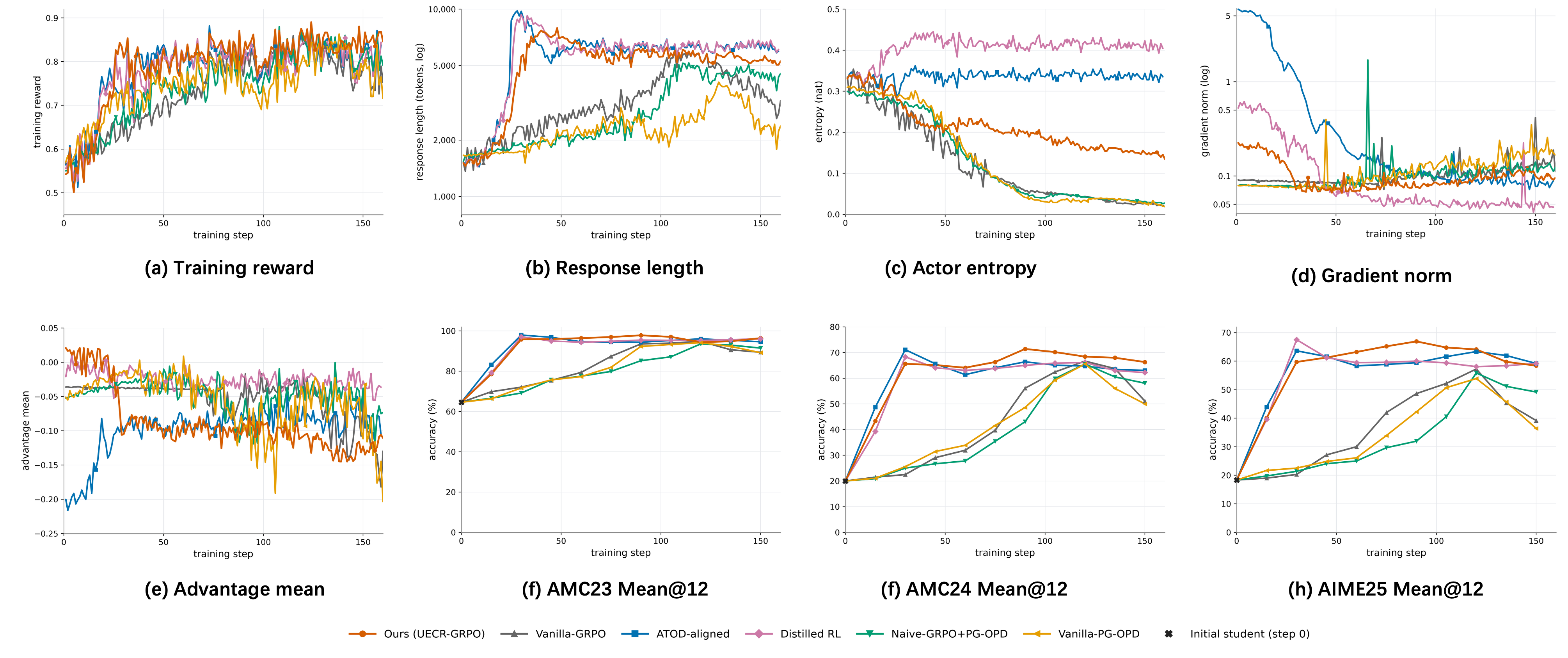}
\caption{Training dynamics for the aligned runs with the Qwen3-4B student
and Qwen3-4B-GRPO teacher, including
training reward, response length, actor entropy, gradient norm, advantage
statistics, and fixed-decoding validation accuracy.}
\label{fig:qwen4b-training}
\end{figure*}

\subsection{Remaining sensitivity and additional controls}

Appendix~\ref{app:alpha-sweep} reports the completed fixed-rollout sensitivity
audit for \(\alpha\). Retraining-based sensitivity to \(\alpha\), \(\rho\),
\(\tau_\delta\), and \(\tau_H\) remains future work; the offline audit does
not estimate the performance of those counterfactual training runs.

The following controls complement Table~\ref{tab:component-ablation}:
\begin{enumerate}[leftmargin=*,topsep=2pt,itemsep=1pt]
  \item set \(\rho=0\) to recover PUU exactly;
  \item set \(c_{i,t}=1\) while retaining the projection;
  \item set \(\mu_i^c=0\) to remove the additive constraint;
  \item apply \(w_{i,t}\) to the full \(A_i^U\), testing whether the teacher
  sequence component should remain unmodulated;
  \item set \(\alpha=0\) while retaining ECR;
  \item remove the signed gap to test whether confidence alone supplies a
  useful direction.
\end{enumerate}

\subsection{Efficiency and batch-fact auditing}

We report wall-clock, rollout time, scorer time, tokens/s, GPU0--7 peak memory,
and scorer overhead relative to Vanilla GRPO.  Logs contain unique prompts,
generated/rewarded/loss trajectories, valid response tokens, global steps, and
optimizer steps.  Seven-way data-parallel padding is removed before
response-wise normalization and restored only as masked rows.  A real empty
response is an error.  Any run that changes the intended global batch,
duplicates prompts, drops trajectories, or miscounts microbatches as global
steps fails the audit.

\subsection{Regression and smoke tests}

Before full training, the implementation verifies that \(\rho=0\) reproduces
the PUU loss and gradient, projection invariants hold on all valid masks,
raw-\(T=1\) teacher and old-policy scores align token by token, and reference
KL enters once.  Disabling teacher scoring must recover native GRPO without
loading the teacher.  CPU unit tests precede 1-, 5-, and 10-step GPU runs.

\section{Reproducibility Checklist}

\paragraph{Artifacts.}
Archive checkpoint, tokenizer, chat template, manifest, source commit, and
resolved-configuration hashes.  Record reward and group-normalization
conventions, decoding parameters, answer extraction, checkpoint selection,
and per-problem outputs.

\paragraph{Shared telemetry.}
For every method, log reward and advantage statistics, zero-variance groups,
PPO ratio/clip/KL statistics, response length and truncation, prompt,
trajectory, and token counts, and peak memory.  Teacher metrics are disabled, not zero, for Vanilla GRPO.

\paragraph{Method-specific telemetry.}
Additionally log \(\delta\), \(H^T\), \(c\), \(d\), \(\mu_i^c\),
\(q\), \(w\), teacher reward, decomposed PUU advantages, response-wise
budget error, and
\(\max_i|A_i^U-(A_i^{\task\mid U}+\alpha A_i^{T\mid U})|\).  State
numerical tolerances before training and treat violations as run failures.